\newtheorem{mythm}{Theorem}
\newtheorem{lemma}{Lemma}
\newtheorem*{notation}{Notation}
\newtheorem*{remark}{Remark}
\newtheorem{definition}{Definition}
\newenvironment{customthm}[1]
{\innercustomthm}
{\endinnercustomthm}
\newcommand{\system}{Nimble\xspace}
\newcommand{\totrt}{2.81}
\newcommand{\totvm}{1.70}
\newcommand{\nimbleurl}{\url{https://github.com/snuspl/nimble}}
\newcommand{\func}{f}
\title{\system: Lightweight and Parallel GPU Task Scheduling for Deep Learning}
\author{%
  Woosuk Kwon\thanks{First two authors have equal contribution},\enskip
  Gyeong-In Yu\footnotemark[1],\enskip
  Eunji Jeong,\enskip
  Byung-Gon Chun\thanks{Corresponding author}\\
  Seoul National University\\
  \texttt{\{kws9603,gyeongin,ejjeong,bgchun\}@snu.ac.kr}\\
}
\begin{document}

\maketitle

\begin{abstract}
	
	Deep learning (DL) frameworks take advantage of GPUs to improve the speed of DL inference and training.
	Ideally, DL frameworks should be able to fully utilize the computation power of GPUs such that the running time depends on the amount of computation assigned to GPUs.
	Yet, we observe that in scheduling GPU tasks, existing DL frameworks suffer from inefficiencies such as large scheduling overhead and unnecessary serial execution.
	To this end, we propose \system, a DL execution engine that runs GPU tasks in parallel with minimal scheduling overhead.
	\system introduces a novel technique called ahead-of-time (AoT) scheduling.
	Here, the scheduling procedure finishes \textit{before} executing the GPU kernel, thereby removing most of the scheduling overhead during run time.
	Furthermore, \system automatically parallelizes the execution of GPU tasks by exploiting multiple GPU streams in a single GPU.
	Evaluation on a variety of neural networks shows that compared to PyTorch, \system speeds up inference and training by up to 22.34$\times$ and 3.61$\times$, respectively.
	Moreover, \system outperforms state-of-the-art inference systems, TensorRT and TVM, by up to \totrt$\times$ and \totvm$\times$, respectively.
	
\end{abstract}

\vspace{8pt}
\section{Introduction}

In recent years, growing demands for deep learning (DL) have facilitated the advance of DL frameworks such as Caffe2~\cite{caffe}, MXNet~\cite{mxnet}, PyTorch~\cite{pytorch}, and TensorFlow~\cite{tensorflow}.
These frameworks provide implementations of GPU-based neural network computations along with high-level APIs, with which users can express the semantics of neural networks as usual Python programs.
Furthermore, such frameworks allow users to describe the training and inference procedure of their networks without the need to control GPUs directly.
DL frameworks then automatically handle GPU intricacies such as copying neural network weights to GPUs and launching DL operators on GPUs.
Operators indicate numerical computations, like convolution and batch normalization, and consist of one or more \textit{GPU tasks} (i.e., GPU kernels and GPU memory operations).

Before a GPU processes a task, DL frameworks must first go through a series of preparation steps (\textit{GPU task scheduling}), and then submit the task to the GPU (\textit{GPU task submission}).
We note that current DL frameworks conduct GPU task scheduling during \textit{run time}.
For instance, TensorFlow, Caffe2, and MXNet represent a neural network as a computation graph of DL operators, and schedule the GPU tasks of an operator at run time once the operator's dependencies are met.
Meanwhile, for PyTorch and TensorFlow Eager~\cite{tf-eager}, GPU tasks are scheduled at run time as Python code is interpreted line by line.

While under ideal circumstances the running time of neural networks mostly depends on the amount of computation assigned to GPUs, in reality we find otherwise.
We point out two important problems in run-time task scheduling that may significantly limit framework performance.
First, the time spent on scheduling, which we call \textit{scheduling overhead}, can take a substantial portion of the overall running time.
Although the scheduling overhead is negligible when the running time of a GPU task is sufficiently long enough to hide the overhead, we find that this does not hold in many cases, especially when inference and training of a neural network consist of small and short GPU tasks.
Modern GPUs~\cite{a100, v100} have thousands of computation units along with specialized processors like Tensor Core~\cite{tensor-core}, and use high bandwidth memory~\cite{hbm2} to avoid bottlenecks from memory bandwidth.
While the time spent on running GPU tasks can dramatically be reduced by such GPUs, we observe that the scheduling overhead is constantly imposed by every GPU task, and often dominates the running time of DL inference and training.

Another problem DL frameworks face is that serial execution of GPU tasks misses the opportunity to further improve performance by parallelizing task execution.
Recent neural networks exhibit inter-operator level parallelism.
For example, topologies of the neural networks obtained by neural architecture search (NAS)~\cite{pathlevel, proxylessnas, darts, enas, amoebanet, nasnet} are directed acyclic graphs (DAGs) with multiple branches rather than linear chains. 
In addition, recent works have proposed new types of layers that consist of smaller operators arranged in parallel, such as MixConv~\cite{mixnet} and Split-Attention~\cite{resnest} blocks.
Leveraging inter-operator parallelism can lead to performance improvements in executing such neural networks, especially in the case of inference.
However, existing DL frameworks~\cite{tensorflow, mxnet, pytorch} are designed and optimized to schedule GPU tasks to be executed one at a time, and thus hardly exploit inter-operator parallelism.

To address the above limitations, we present \system, a new DL execution engine that schedules GPU tasks to run in parallel with minimal scheduling overhead.
The key observation that drives the design of \system is that for static neural networks the behavior of a network is predetermined by its architecture.
For both inference and training, DL frameworks run the exact same computation graph with the same shapes of inputs over and over again.
Thus, we can leverage detailed information about the computation graph and the input shape to optimize the scheduling of GPU tasks.

To avoid the scheduling overhead, \system introduces a novel \textit{ahead-of-time (AoT) scheduling} technique.
\system schedules GPU tasks for a given neural network execution ahead of time; later when \system is given an input, \system skips scheduling and proceeds immediately to task submission.
Since the preparation steps of GPU tasks are invariant to each neural network execution (i.e., independent of the input values), we only need to perform task scheduling once.
While \system's AoT scheduler performs GPU task scheduling, it records a trace of GPU tasks and GPU memory requests, and generates a \textit{task schedule}.
The task schedule contains all information and resources (i.e., result of the scheduling) required for the execution of the neural network, including the submission order between GPU tasks, function arguments for the GPU tasks, and how to run GPU tasks in parallel.
At run time, \system substitutes the high-overhead scheduling procedure by the raw submission of GPU tasks based on the task schedule, dramatically reducing the scheduling overhead.

To execute multiple GPU tasks in parallel on a GPU, \system employs \textit{automatic multi-stream execution}.
Although the CUDA programming interface provides Stream API for concurrent kernel execution~\cite{cudaprogramming}, assigning neural network operators to appropriate streams is a difficult task for users.
\system automates the stream assignment and synchronization process.
Before AoT scheduling, \system analyzes dependency relationships between operators and finds an optimal stream assignment that guarantees the smallest number of synchronizations across streams while parallelizing as many operators as possible.
Given the operator-to-stream mapping, \system rewrites the computation graph of the given neural network to run the GPU tasks of the operators on their corresponding streams with proper synchronizations.
The modified graph is then used as an input to the AoT scheduler, which in turn embeds the information about the stream mapping and synchronization in the task schedule.

\system is built on top of PyTorch and supports both inference and training of neural networks. Users can seamlessly apply \system to their PyTorch programs by wrapping DL model instances in \system objects.
Our evaluation on a variety of deep neural networks shows that \system improves the speed of inference and training by up to 22.34$\times$ and 3.61$\times$ compared to PyTorch, respectively.
Moreover, \system outperforms state-of-the-art inference systems, TensorRT~\cite{tensorrt} and TVM~\cite{tvm}, by up to \totrt$\times$ and \totvm$\times$, respectively. \system is publicly available at \nimbleurl.

\setcounter{footnote}{0}
\section{Background}
\label{sec:back}
\vspace{-6pt}

Here, we provide background on GPU task scheduling in existing frameworks and GPU streams.

\begin{figure}[t]
	\centering
	\vspace{-13pt}
	\includegraphics[width=0.8\linewidth]{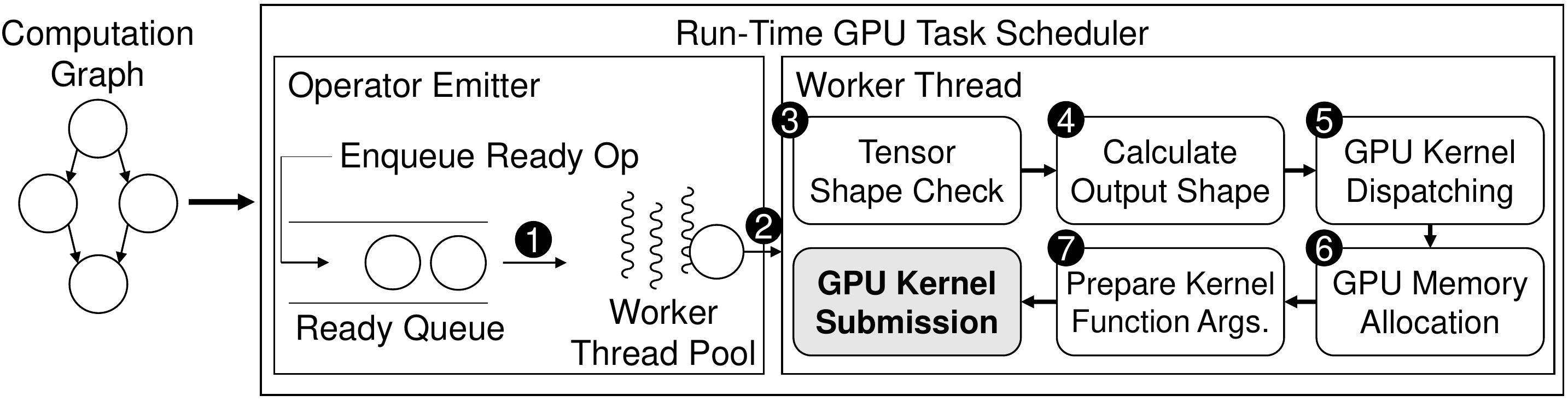}
	\caption{
		GPU task scheduling in DL frameworks that build a computation graph for DL execution.}
	\label{fig:tf_execution_model}
	\vspace{-12pt}
\end{figure}

\vspace{-7pt}
\paragraph{GPU Task Scheduling in DL Frameworks}
The task scheduling mechanisms of existing DL frameworks are largely divided into two categories.
First, DL frameworks including TensorFlow, Caffe2 and TorchScript~\cite{torchscript} express a neural network as a computation graph where each node represents a DL operator and each edge indicates a dependency between two operators.
The runtime stack of such a DL framework consists of two major system components (written in C++): the operator emitter and the workers.
The operator emitter maintains a queue of operators whose dependencies are met and emits the operator at the front of the queue to a worker thread.
The worker takes the emitted operator and performs a series of preparation steps and finally submits GPU kernels for each operator.
As such, DL frameworks in this category \textit{schedule} the GPU tasks at run time through the interplay of the operator emitter and the workers.

Second, DL frameworks including PyTorch and TensorFlow Eager describe a neural network as an imperative Python program.
In such DL frameworks, there is no explicit computation graph of the neural network nor operator emitter in the runtime stack.
That is, the operators are emitted by the Python interpreter as the program is executed line by line.
The emitted operators are then processed by the worker in a similar manner to the DL frameworks in the first category.
As such, DL frameworks in the second category also perform the run-time scheduling of GPU tasks, through the Python interpreter and the worker.

Figure~\ref{fig:tf_execution_model} illustrates in detail how DL frameworks such as TensorFlow and Caffe2 carry out run-time scheduling.
To submit a GPU task, the run-time scheduler must go through the following process:
\ding{182} select an operator from the ready queue;
\ding{183} emit the operator to a vacant worker thread;
\ding{184} check the types and shapes of input tensors;
\ding{185} calculate the types and shapes of output tensors;
\ding{186} dispatch appropriate GPU kernels for the operator based on tensor types and shapes;
\ding{187} allocate GPU memory for the output tensors and workspace for the kernels, typically by retrieving memory blocks from the cached pool of GPU memory; and
\ding{188} prepare function arguments required for submitting the kernels.
While specific steps may differ across DL frameworks, the overall process remains the same.

\vspace{-6pt}
\paragraph{GPU Streams}

GPUs provide high throughput in tensor computation due to their capability to run thousands of threads in parallel.
To fully utilize the computation power of a GPU, GPU kernels must have a sufficient level of intra-kernel parallelism~\cite{cudaprogramming}.
Unfortunately, this is not always possible because the number of threads is often limited by various factors, including the implementation of the kernel and the size of the tensor being computed.
Another way to enhance the GPU utilization is to schedule multiple GPU tasks to run in parallel using multiple GPU streams.
A GPU stream is a queue of GPU tasks where the tasks are scheduled sequentially in FIFO order.
While kernels on the same stream cannot be executed concurrently, kernels on different streams can be computed in parallel, occupying different parts of the GPU resources.
The execution order between them, however, is not guaranteed unless explicitly specified by stream synchronization primitives~\cite{cudaprogramming}.
Note that existing DL frameworks are designed and optimized to submit GPU kernels to a single GPU stream.
For example, TensorFlow uses a single \textit{compute stream} per GPU for running its kernels.

\vspace{-5pt}
\section{Motivation}
\label{sec:motiv}
\vspace{-5pt}

In this section we present experiments describing the problems in GPU task scheduling of current DL frameworks.
The experiments are conducted on TensorFlow~\cite{tensorflow} and PyTorch~\cite{pytorch}, the two most popular DL frameworks.
The experiment setting is the same as that of the evaluation in Section~\ref{sec:eval}.

\begin{figure}[t]
	\centering
	\begin{subfigure}{0.3725\linewidth}
		\centering
		\includegraphics[width=\linewidth]{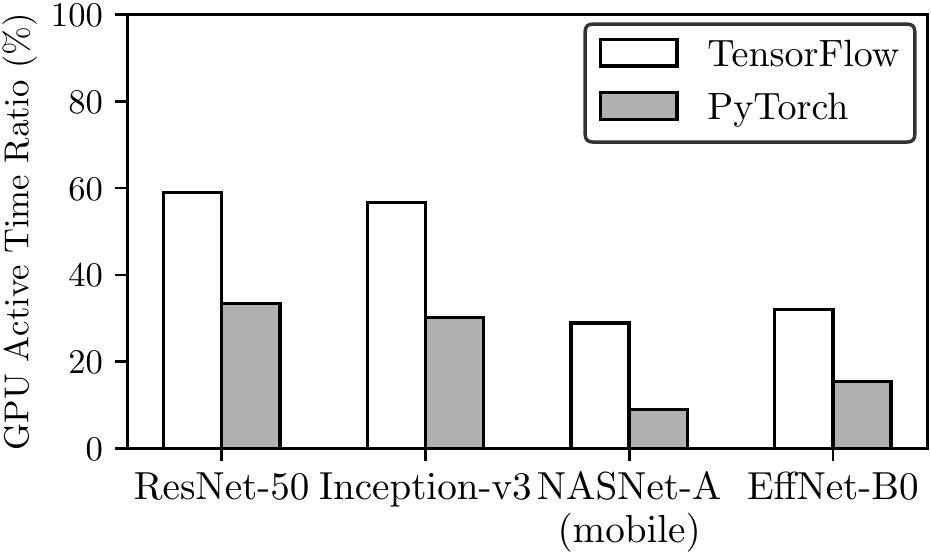}
		\caption{Ratio of GPU active time to the overall running time on DL inference.}
		\label{fig:gpu_time_ratio}
	\end{subfigure}
	\hspace{0.005\linewidth}
	\begin{subfigure}{0.2425\linewidth}
		\centering
		\includegraphics[width=0.92\linewidth]{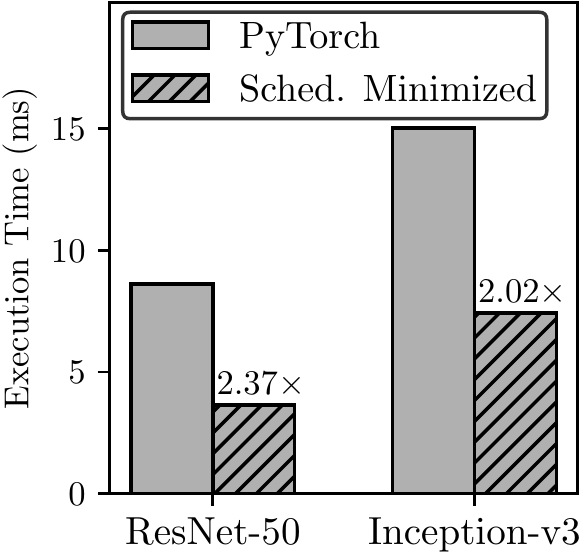}
		\vspace{-3pt}
		\caption{Inference latencies of PyTorch and its scheduling-minimized version.}
		\label{fig:hand_optimization}
	\end{subfigure}
	\hspace{0.005\linewidth}
	\begin{subfigure}{0.351\linewidth}
		\centering
		\includegraphics[width=\linewidth]{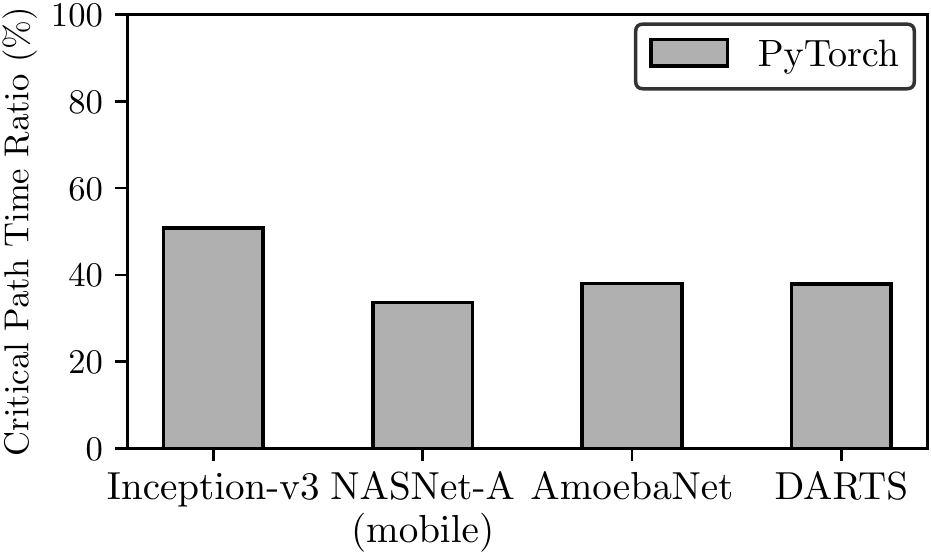}
		\caption{Ratio of critical path time to the GPU active time on DL inference.}
		\label{fig:critical_delay}
	\end{subfigure}
	\caption{Various experiments showing inefficiencies in existing DL frameworks regarding scheduling overhead and serial execution.}
	\vspace{-18pt}
	\label{fig:motiv_experiments}
\end{figure}

\vspace{-5pt}
\paragraph{High Scheduling Overhead Makes GPUs Idle}

We experimentally demonstrate that the run-time scheduling often incurs prohibitive amount of scheduling overhead such that GPU idle time dominates overall running time of DL execution.
Figure~\ref{fig:gpu_time_ratio} shows the ratios of the \textit{GPU active time}, sum of the time intervals during which GPU is not idle, to the overall running time spent on the inference of the neural networks~\cite{resnet, inception, efficientnet, nasnet} with batch size 1.
In the result, both TensorFlow and PyTorch leave their GPUs idle for a substantial portion of the running time, up to $71\%$ and $91\%$, respectively.
While the inefficiency in PyTorch can be partially attributed to the slowness of Python interpreter, the high overhead in TensorFlow implies that the major source of the performance bottleneck lies in the core runtime stack of the framework, and that the overhead remains significant even if the runtime is written in low-overhead language such as C++.

To further support our idea, we measure the performance of a DL framework when its scheduling procedure is minimized.
For the experiment, we write a C++ program that can only perform the inference of the specific neural networks~\cite{resnet, inception} with a fixed input shape and uses the same GPU kernels and memory operations as PyTorch.
From the assumptions that the given neural network is static and the shape of its input tensor is fixed, we prune away any redundant routines that can be done ahead of the run time.
For example, shape check is omitted and the shapes of the output tensors are hardcoded in the program since every shape information can be inferred ahead of time based on the neural network architecture and the predetermined input shape.
In this way, the program directly submits the GPU kernels at run time without going through the PyTorch's runtime stack for dispatching them.
Likewise, GPU memory allocation is skipped and the tasks reuse fixed, pre-allocated memory regions for every iteration whose addresses are also hardcoded in the program.

Figure~\ref{fig:hand_optimization} shows the impact of such optimizations on the scheduling procedure.
Despite the fact that exactly the same set of GPU kernels are computed, PyTorch and its scheduling-minimized version present remarkably different inference latencies: $2.37\times$ speedup is obtained in ResNet-50 by the simple minimization of the scheduling procedure.
The result confirms that the main source of the GPU idle time is the overhead of the scheduling procedure described in Section~\ref{sec:back}.
Greater performance gain is expected in those neural networks with lower GPU active time ratio (e.g., EfficientNet-B0).

\vspace{-7pt}
\paragraph{Non-Parallel GPU Task Execution}

Framework performance can be further improved by parallelizing GPU tasks.
Figure~\ref{fig:critical_delay} shows the ratios of \textit{critical path time} to the GPU active time in the inference of the neural networks~\cite{darts, amoebanet, inception, nasnet} with batch size 1.
The critical path time is sum of the GPU active times spent on the operators in the longest path (in terms of time) of the computation graph.
The result implies that inference latency can be reduced by up to 3$\times$ when the GPU tasks are fully parallelized and executed on a sufficiently powerful GPU (i.e., a GPU that can compute every concurrent kernel simultaneously).

\begin{wrapfigure}{r}{0.5\linewidth}
	\centering
	\vspace{-14pt}
	\includegraphics[width=\linewidth]{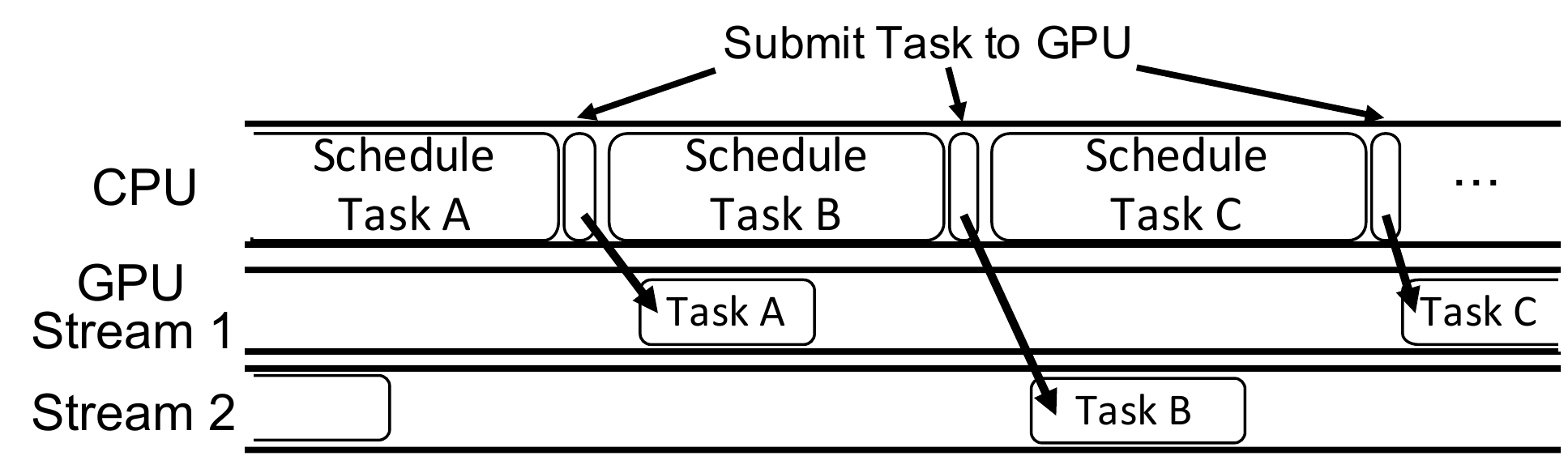}
	\caption{High scheduling overhead inhibits efficient use of multiple GPU streams.}
	\label{fig:overhead_and_multi_stream}
	\vspace{-8pt}
\end{wrapfigure}

In spite of the potential performance gain, existing DL frameworks do not effectively support the use of multiple GPU streams.
One major obstacle we found is that the high scheduling overhead significantly decreases the chance that GPU tasks on different streams are computed in parallel.
For example, Figure~\ref{fig:overhead_and_multi_stream} illustrates the timeline where GPU tasks A and B are scheduled in different streams.
Contrary to the expectation that the two tasks are processed at the same time, the scheduling overhead creates a gap between the start time of the two tasks, which is longer than the duration of GPU task A.
As a result, the GPU ends up executing the tasks one at a time.

\section{System Design}
\label{sec:design}

\begin{figure}[t]
	\centering
	\includegraphics[width=0.9\linewidth]{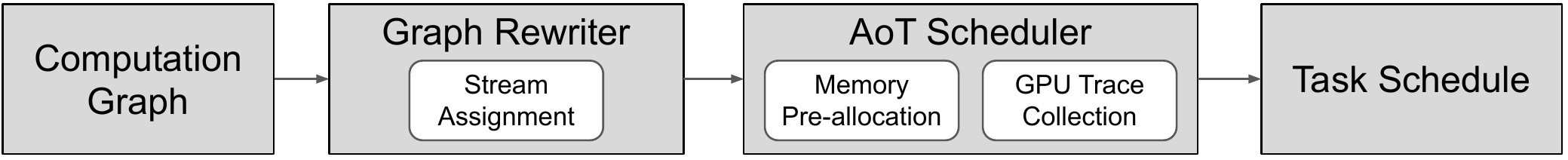}
	\caption{System overview of \system.}
	\label{fig:system_design}
\end{figure}

Motivated by the observations in Section~\ref{sec:motiv}, we present \system, a DL execution engine to \textit{automatically} avoid the scheduling overhead of DL frameworks and parallelize GPU tasks using multiple GPU streams.
\system takes a DL framework as its base system, and resolves the inefficiencies in GPU task scheduling without redesigning the framework runtime.
In the current implementation, \system is built on top of PyTorch, but the system design is applicable to other DL frameworks.

Figure~\ref{fig:system_design} summarizes execution steps in \system.
The system consists of Graph Rewriter and AoT Scheduler.
\system first takes as input a computation graph of a neural network.
The computation graph is represented as a TorchScript~\cite{torchscript} graph in PyTorch.
The graph rewriter analyzes the computation graph and constructs an operator-to-stream mapping by the algorithm we present in Section~\ref{sec:multi-stream}.
It marks each operator with the stream that the operator will be issued on and embeds synchronization routines to the graph by using custom nodes we add.
The AoT scheduler of \system then goes through a series of preparation steps for the execution of the GPU tasks ahead of time.
During the process, the scheduler collects a \textit{GPU execution trace} and reserves GPU memory used in executing the GPU tasks.
Finally, \system packs the GPU trace and the reserved memory into a task schedule.
At run time, the recorded GPU tasks are replayed on the basis of the task schedule for every DL execution.

\subsection{Ahead-of-time (AoT) Scheduling}
\label{sec:aot}

The AoT scheduler aims to generate a task schedule, finishing the scheduling procedure required for submitting GPU tasks ahead of time.
Our observation is that we can move the GPU task scheduling outside the run time execution loop without changing the semantics of neural network execution, similar to the loop-invariant code motion in compilers.
In other words, while the existing frameworks repeat the scheduling procedure at every neural network execution, \system's AoT scheduler finishes the scheduling once ahead of time, providing a significant speedup in executing the neural network.
This is possible because \system assumes a static neural network that performs the same set of computations for different runs, which means we can reuse the work done for scheduling after it is done once. 
However, this AoT scheduling raises two challenges: (a) how to distinguish the scheduling procedure that can be safely removed from the run time execution; and (b) how to move the scheduling procedure out of the run time execution.

We solve these challenges by approaching the problem from a direction different from typical performance bottleneck optimization.
Instead of differentiating and removing the scheduling procedure from the run time execution, \system identifies \textit{non-}scheduling work, i.e., the GPU tasks.
That is, \system takes advantage of the fact that the computation of a given static neural network is fully described by a fixed set of GPU tasks, and that the scheduling procedure of DL frameworks becomes redundant once the set of the GPU tasks are determined.
During the AoT scheduling, \system \textit{pre-runs} the given neural network once according to the generated stream mapping, and records all the GPU tasks as an execution trace.
The pre-run process is a single iteration of inference/training execution of the given neural network using the base framework of \system.
During the pre-run process, while the scheduling procedure of the base framework is done as usual, the GPU tasks submitted from the framework are intercepted and recorded.
The generated execution trace contains all the essential information resulted from the scheduling: dispatched GPU kernels, function arguments of the kernels, task submission order, task-to-stream assignment, etc.
Once the pre-run process is done, \system can leverage the execution trace for submitting the tasks to the GPU, skipping the scheduling procedure.

To execute the collected GPU tasks, GPU memory should be allocated for inputs and outputs of the tasks.
Since a static neural network makes the same sequence of memory requests for different runs, we can pre-allocate the exact amount of GPU memory required for its execution.
For this purpose, during the process of pre-run, \system also intercepts memory allocate/free requests from the base framework and reserves the GPU memory allocated for the pre-run.
The reserved memory is then used for the run time execution of \system.

At the end of the AoT scheduling, \system packs the execution trace and the reserved memory into a task schedule.
At run time, \system conducts inference/training of the given neural network by directly submitting the GPU tasks recorded in the task schedule with the addresses of the reserved memory regions. 
In this manner, the GPU tasks can be executed independently of the base DL framework, without being tied up with the runtime and the memory allocation procedure of the base framework.

\begin{figure}[t]
	\centering
	\includegraphics[width=0.85\linewidth]{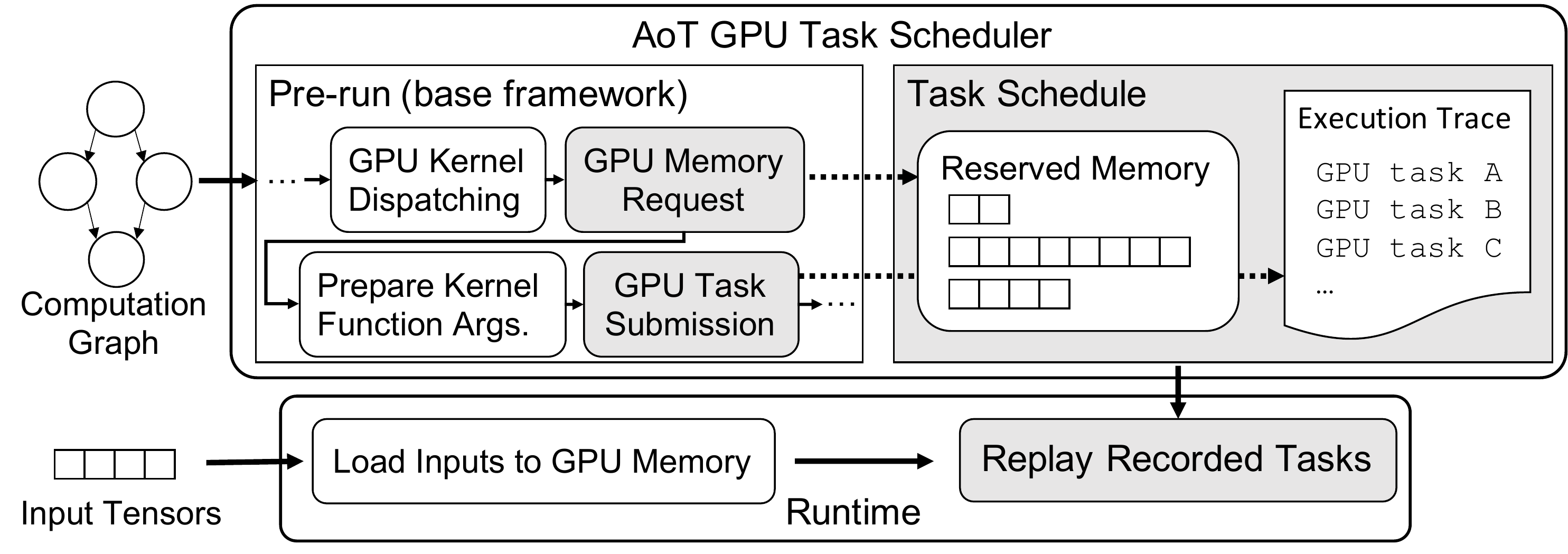}
	\caption{
		AoT GPU task scheduler and Runtime of \system. Dashed arrows represent the interception of GPU tasks and memory requests by the AoT scheduler.
	}
	\vspace{-14pt}
	\label{fig:aot}
\end{figure}

Figure~\ref{fig:aot} gives more details about the AoT scheduling technique.
According to the stream assignment result, the AoT scheduler pre-runs the neural network once with a dummy input tensor. 
During the pre-run process, the scheduler intercepts invocations of GPU tasks and allocations of GPU memory, and constructs a task schedule.
To be concrete, we use CUDA Stream Capture APIs for capturing information of GPU tasks issued on CUDA Streams, at the beginning and end of the pre-run.
Then we instantiate a CUDA Graph~\cite{cudagraph}, a feature introduced in CUDA 10, (i.e., execution trace representation in \system) from the captured information.
At run time, when there is a request with a new input tensor, \system executes the neural network by replaying the recorded GPU tasks on the basis of the task schedule, avoiding the scheduling overhead.
We execute the neural network by using CUDA Graph Launch APIs, which submit GPU tasks based on the information in the CUDA Graph.

\vspace{-4pt}
\subsection{Stream Assignment Algorithm}
\label{sec:multi-stream}
\vspace{-4pt}

\system schedules GPU tasks to run in parallel by submitting them to multiple GPU streams in a single GPU. In this section, we describe an efficient algorithm for assigning GPU tasks to streams.

\vspace{-6pt}
\paragraph{Stream Synchronization}
Allowing concurrency requires proper synchronizations across streams.
For example, assume that two independent GPU tasks A and B are given, and that another GPU task C consumes both outputs of A and B.
If the three GPU tasks are submitted to a single stream (with the order of either A$\rightarrow$B$\rightarrow$C or B$\rightarrow$A$\rightarrow$C), no synchronization is needed. 
In contrast, if the three GPU tasks are submitted to three different streams, we should guarantee that GPU task C begins executing only after both GPU tasks A and B are completed.
In CUDA, such dependencies across different streams can be enforced by using \emph{events}, a special type of GPU tasks that can act as barriers.
In the example, a desirable way to guarantee the execution order is to create and submit an event for each stream where GPU task A or B has been launched.
We then call \texttt{cudaStreamWaitEvent} for each event to block the stream of GPU task C until both events are processed, which means that the execution of GPU tasks A and B have finished.
We refer to issuing an event on the stream of task X and blocking the stream of task Y as a synchronization on the edge (X, Y).
We count the number of synchronizations as the number of edges where a synchronization takes place.

A few DL frameworks~\cite{pytorch, chainer} have high-level APIs through which programmers can create, switch, and block the streams where GPU tasks run.
Nevertheless, as we pointed out in Section~\ref{sec:motiv}, leveraging multiple streams on these frameworks rarely yields performance enhancement due to their GPU task scheduling overheads.
Additionally, even when the framework users are able to take advantage of the multi-stream execution, it remains as a significant burden for the users to assign and synchronize the streams in a safe and an efficient manner.
\system resolves these difficulties by \emph{automatically} parallelizing the GPU tasks.
The process of parallelization and synchronization is transparent to users but it gives speedup when running neural networks with parallelizable structures.

\vspace{-5pt}
\paragraph{Goal of the Algorithm}
Given a computation graph, which is a DAG of DL operators, \system finds a \textit{stream assignment}, a mapping from the node set of the computation graph to a stream set of the GPU.
\system's stream assignment algorithm meets the following two goals:

\begin{itemize}[leftmargin=20pt,topsep=0pt]
	\item \textbf{Maximum logical concurrency.} Given a neural network $G=(V, E)$ and a set of streams $S=\{s_1, s_2, ..., s_n\}$, find a mapping $\func:V \to S$ such that if $x, y \in V$ and there is no dependency between $x$ and $y$ (i.e., no established order exists between the two), then $\func(x) \neq \func(y)$ (i.e., the two nodes are assigned to different streams).
	\item \textbf{Minimum number of synchronizations.} Among such functions, find $\func$ that incurs the smallest number of synchronizations across streams.
\end{itemize}

Maximum logical concurrency is an optimization strategy that generalizes a common practice.
To increase the chance that GPU resources are fully utilized, maximizing the concurrency is desirable. 
In addition, the algorithm factors in the number of synchronizations needed for safe concurrent execution.
Since synchronizations hamper the fast launching of tasks, the algorithm is designed to incur the theoretically smallest number of synchronizations while maintaining maximum concurrency.

\begin{algorithm}[t]
	\caption{\system's stream assignment algorithm.}
	\small
	\SetAlgoLined
	
	\begin{itemize}[itemsep=1pt,parsep=1pt]
		\item [\textbf{Input}] A DAG $G = (V, E)$ where $V = \{v_1, v_2, ..., v_n\}$.
		\item [\textbf{Output}] A stream assignment $\func: V \to S$.
	\end{itemize}
	
	\begin{enumerate} [label=\textbf{Step \arabic*},itemsep=1pt,parsep=1pt]
		\item Obtain the minimum equivalent graph of $G$. We call this graph $G' = (V, E')$.
		\item Define a bipartite graph $B = (V_1, V_2, E_B)$ where $V_1 = \{x_1, x_2, ..., x_n\}$, $V_2 = \{y_1, y_2, ..., y_n\}$, and $E_B = \{(x_i, y_j) \;|\; (v_i, v_j) \in E' \}$.
		\item Find a maximum matching $M$ of the bipartite graph $B$.
		\item Make a collection of sets $\{\{v_1\}, \{v_2\}, ..., \{v_n\}\}$. For each $(x_i, y_j) \in M$, combine the two sets that $v_i$ and $v_j$ are in. The result is a partition of $V$.
		\item Construct $\func: V \to S$ in such a way that $\func(v_i) = \func(v_j)$ 
		iff $v_i$ and $v_j$ are included in the same set.
	\end{enumerate}
	\vspace{-4pt}
	\label{alg:assignment}
\end{algorithm}

\begin{figure*}
	\vspace{-11pt}
	\centering
	\includegraphics[width=\linewidth]{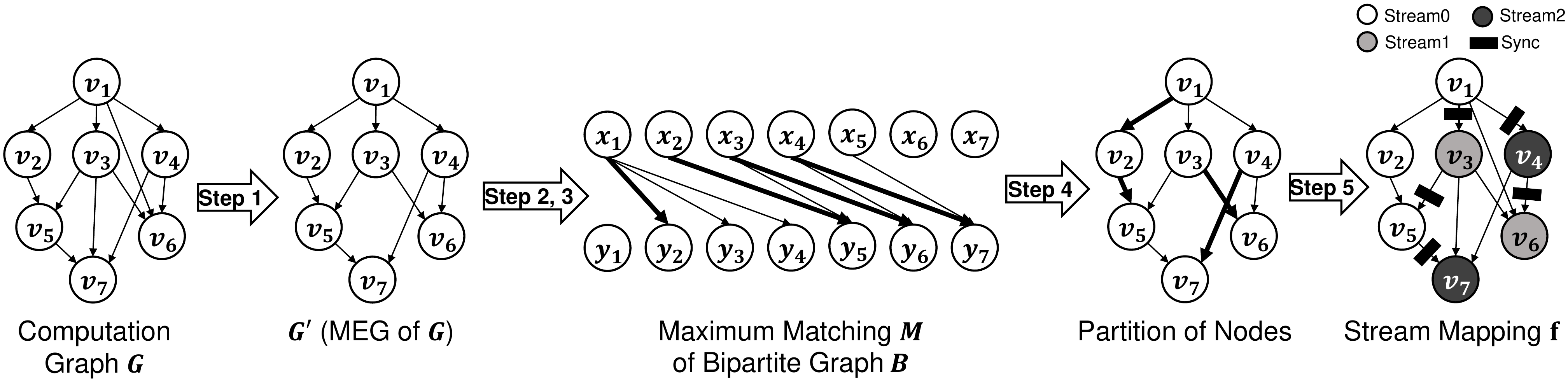}
	\caption{
		Example walk-through of Algorithm~\ref{alg:assignment}. Bold lines indicate edges in maximum matching $M$.}
	\label{fig:algorithm}
	\vspace{-13pt}
\end{figure*}

\paragraph{Algorithm Description} The stream assignment algorithm of \system is described in Algorithm~\ref{alg:assignment}.
Figure~\ref{fig:algorithm} illustrates how the algorithm is applied to a computation graph $G$. 
At Step 1, we compute the \emph{minimum equivalent graph} (MEG) $G'$, which is a subgraph of the computation graph $G$ with the same set of the nodes and the smallest subset of the edges that maintains the same reachability relation as $G$.
Note that the MEG of a finite DAG is unique and can be constructed in polynomial time~\cite{meg}.
At Step 2 and Step 3, we define a bipartite graph $B$ from $G'$ and then find a \emph{maximum matching} of $B$, a matching that includes the largest number of edges.
A maximum matching of a bipartite graph can be computed by Ford-Fulkerson algorithm~\cite{ford_fulkerson}.
At Step 4, we first create a collection of node sets where each node in the graph $G'$ is a separate set.
Then for each edge $(x_i, y_j)$ in $M$, we combine the two node sets that $v_i$ and $v_j$ are in.
At Step 5, nodes belonging to the same set are mapped to the same stream, and nodes belonging to different sets are mapped to different streams.

We now demonstrate that the stream assignment constructed from Algorithm~\ref{alg:assignment} meets the two goals by using the following theorems.
Detailed proofs on the theorems are presented in Appendix~\ref{sec:proof}.

\begin{mythm}
	\label{theorem1}
	A stream assignment $\func$ satisfies maximum logical concurrency on $G$ if and only if $\func$ satisfies maximum logical concurrency on $G'$.    
	Also, for any stream assignment $\func$ that satisfies maximum logical concurrency, the minimum number of synchronizations required for $\func$ on $G$ is equal to the minimum number of synchronizations required for $\func$ on $G'$.
\end{mythm}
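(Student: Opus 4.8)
Both assertions will follow from the single defining property of the minimum equivalent graph recalled above: $G'=(V,E')$ is a subgraph of $G$ (so $E'\subseteq E$) with the same reachability relation. I will write $R(H)\subseteq V\times V$ for the reachability relation of a DAG $H$ on $V$, i.e.\ $(u,v)\in R(H)$ iff some directed path of $H$ runs from $u$ to $v$, and set $R:=R(G)=R(G')$. Note that $E'\subseteq E\subseteq R$ and that $R$ is the transitive closure of $E$ and equally of $E'$. The phrase ``$x$ and $y$ have no dependency'' means $(x,y)\notin R$ and $(y,x)\notin R$; this ``independence'' relation is determined by $R$ alone, hence is literally the same relation whether we read it off $G$ or off $G'$.

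\emph{Maximum logical concurrency.} Maximum logical concurrency of $\func$ on a graph $H$ asks exactly that $\func(x)\neq\func(y)$ for every pair $x,y$ that is independent in $H$. Since the independence relations of $G$ and of $G'$ coincide, this condition is word-for-word the same in the two cases, which gives the first sentence of the theorem. (Equivalently, maximum logical concurrency says precisely that every stream $\func^{-1}(s)$ is a chain of the partial order $R$, and $R$ does not depend on which of $G,G'$ we look at.)

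\emph{Number of synchronizations.} Fix $\func$ with maximum logical concurrency. First I will pin down the ``base order'' that $\func$ forces for free: the tasks on one stream are submitted in a fixed topological order, hence in a linear extension of $R$, so the FIFO order within a stream is a total order consistent with $R$; and since no two independent vertices share a stream, that total order is simply $R$ restricted to the stream. Thus $B(\func):=\bigcup_{s}\big(R\cap(\func^{-1}(s))^{2}\big)$ depends only on $\func$. Next, a set $C$ of synchronization edges is \emph{sufficient} for a graph $H$ precisely when the base order together with $C$ enforces every dependency of $H$, i.e.\ $E(H)\subseteq\mathrm{tc}\big(B(\func)\cup C\big)$, where $\mathrm{tc}(\cdot)$ denotes transitive closure. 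Because $\mathrm{tc}\big(B(\func)\cup C\big)$ is already transitive, this is equivalent to $\mathrm{tc}(E(H))\subseteq\mathrm{tc}\big(B(\func)\cup C\big)$, i.e.\ to $R\subseteq\mathrm{tc}\big(B(\func)\cup C\big)$ --- a condition that never mentions $H$. Hence the family of sufficient synchronization sets for $G$ is identical to that for $G'$, so their minimum cardinalities agree, which is the second sentence of the theorem. (If one instead demands that synchronization edges be edges of the graph under consideration, the ``$\le$'' direction is immediate from $E'\subseteq E$, and ``$\ge$'' follows from the observation that a redundant edge of $G$ is never needed in a minimum sufficient set, since the in-edges it shortcuts are already forced in order to reach the intermediate vertices.)

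\emph{Where the work is.} There is no deep obstacle here; the only thing that needs care is setting up the synchronization model precisely --- in particular, justifying that the per-stream FIFO order is consistent with $R$ (this is exactly where maximum logical concurrency enters, to forbid two independent tasks on one stream) and that passing through transitive closure lets one trade ``enforces every edge of $H$'' for ``enforces all of $R$.'' Once that bookkeeping is in place, the whole theorem rests on the reachability-preservation $R(G)=R(G')$ built into the definition of the MEG.
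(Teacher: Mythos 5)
Your proof of the first assertion is exactly the paper's: maximum logical concurrency is a condition on the independence relation, which is determined by reachability, and $G$ and $G'$ have the same reachability relation by definition of the MEG. For the second assertion you take a genuinely different route. The paper fixes a combinatorial notion of a \emph{safe synchronization plan} $\Lambda\subseteq E$ (every edge $(u,v)$ with $\func(u)\neq\func(v)$ must admit some $u$-to-$v$ path meeting $\Lambda$) and proves that the safe plans for $G$ and for $G'$ coincide, using the fact that an edge of the MEG is the unique path between its endpoints for one direction and a finite-descent argument for the other. You instead model the guaranteed ordering as $\mathrm{tc}\bigl(B(\func)\cup C\bigr)$ and observe that ``enforces every edge of $H$'' is equivalent to ``contains $\mathrm{tc}(E(H))=R$,'' a condition that does not see whether $H=G$ or $H=G'$; this collapses the paper's Lemma 2 into one line, is arguably more transparent, and as a bonus makes the synchronization count of Theorem 3 nearly immediate. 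Two caveats. First, your sufficiency condition is not literally the paper's definition of safety; the two are equivalent for assignments with maximum logical concurrency (one direction needs a short induction on topological span, the other needs that a $B(\func)$-step corresponds to an actual path in $E$), and since you wrote this blind that is a legitimate modeling choice, but the equivalence would have to be stated for your argument to substitute for the paper's. Second, the paper minimizes over $\Lambda\subseteq E$ (respectively $\Lambda\subseteq E'$), and your parenthetical disposal of that restriction --- ``a redundant edge of $G$ is never needed'' --- is asserted rather than proved. It is true, and follows cleanly from the paper's Lemma 1: an edge of $E'$ whose endpoints lie on different streams admits no other connecting path, so any sufficient set must synchronize it directly, and those forced synchronizations already constitute a sufficient set contained in $E'$. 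As written, though, that sentence is the one real gap in rigor.
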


\begin{mythm}
	\label{theorem2}
	There exists one-to-one correspondence $\Phi$ from the set of the matchings of the bipartite graph $B$ to the set of the stream assignments that satisfy maximum logical concurrency on $G'$.
	In fact, $\Phi$ is constructed by Step 4 and Step 5 of Algorithm~\ref{alg:assignment}.
\end{mythm}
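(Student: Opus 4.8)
}
Write $\Phi(M)$ for the stream assignment obtained by feeding a matching $M$ of $B$ through Step~4 and Step~5 of Algorithm~\ref{alg:assignment}. The plan is to (i) check that $\Phi$ is well defined, i.e.\ that $\Phi(M)$ always satisfies maximum logical concurrency on $G'$; (ii) show that $\Phi$ is injective; and (iii) exhibit an explicit two-sided inverse $\Psi$, which gives surjectivity --- this last part being where the real work lies. Since Step~5 pins down everything about a stream assignment except the immaterial naming of streams, I regard a stream assignment as the partition of $V$ it induces, so the claim reduces to a bijection between the matchings of $B$ and the partitions of $V$ into chains of the reachability order (which is the same for $G$ and $G'$). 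This is the bipartite-matching picture of chain partitions familiar from the matching proof of Dilworth's theorem, instantiated with the specific graph $B$.

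The bridge is the following dictionary. A matching $M$ of $B$ is the same datum as a directed subgraph $H_M=(V,E_M)$ of $G'$ with $E_M=\{(v_i,v_j):(x_i,y_j)\in M\}$ in which every vertex has out-degree at most one (each $x_i$ is matched at most once) and in-degree at most one (each $y_j$ is matched at most once); and since $G'$ is acyclic, so is $H_M$, so each weakly connected component of $H_M$ is a single directed path (possibly a lone vertex). The partition produced by Step~4 is exactly the partition of $V$ into the vertex sets of these components. Each such path is a chain of the reachability order, because every $G'$-edge is a reachability relation and reachability is transitive; hence incomparable nodes always fall in different components and therefore, by Step~5, on different streams, so $\Phi(M)$ satisfies maximum logical concurrency on $G'$ (and, by Theorem~\ref{theorem1}, on $G$). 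Injectivity then follows from a uniqueness remark: if $\Phi(M)$ and $\Phi(M')$ induce the same partition, then on each block $C$ the component is the \emph{unique} directed path that visits the elements of $C$ in increasing reachability order, so $H_M$ and $H_{M'}$ coincide block by block, whence $M=M'$.

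For the inverse, take a stream assignment $\func$ satisfying maximum logical concurrency on $G'$. By the defining condition each class $\func^{-1}(s)$ is a chain; listing it as $c_1<c_2<\dots<c_k$ in reachability order, let $\Psi(\func)$ be the union, over all classes, of the edge sets $\{(x_{c_t},y_{c_{t+1}}):1\le t<k\}$. One verifies that no $x_i$ and no $y_j$ occurs twice, so $\Psi(\func)$ is a matching; that $\Phi(\Psi(\func))=\func$, because Step~4 recollapses each listed chain to a single block and fuses nothing across classes; and that $\Psi(\Phi(M))=M$, by the path-uniqueness remark above. The step I expect to be the main obstacle is establishing that $\Psi(\func)$ is a matching \emph{of $B$}, rather than merely of the complete bipartite graph on $V_1\cup V_2$: this demands that every consecutive pair $(c_t,c_{t+1})$ of a chain be an edge of $G'$, and it is precisely here that the construction of $B$ from $G'$ in Step~2, together with the structural properties of the minimum equivalent graph, must be exploited --- the most delicate point of the argument.
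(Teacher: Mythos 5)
Your parts (i) and (ii) are correct, and they take a cleaner route than the paper. The paper establishes well-definedness and injectivity by two separate contradiction arguments, the second leaning on the minimum-equivalent-graph property that an edge $(v_i,v_j)\in E'$ is the \emph{only} path from $v_i$ to $v_j$ (Lemma~\ref{thm:lem1}); your single structural observation --- that a matching of $B$ is exactly a spanning subgraph of $G'$ with in- and out-degree at most one, whose components are directed paths, so that the blocks produced by Step~4 are vertex sets of paths and each block carries a unique increasing path --- yields both facts at once, and your injectivity argument does not need the MEG lemma at all. That portion of the plan I would accept as essentially complete.

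The surjectivity step, however, is not merely delicate: the bridging claim you isolate --- that consecutive elements $c_t < c_{t+1}$ of a stream class are always joined by an edge of $G'$ --- is false, and with it the surjectivity assertion of Theorem~\ref{theorem2} as literally stated. Take $G=G'$ to be the two-edge path $a\to b\to c$ and let $\func(a)=\func(c)\neq \func(b)$. Every pair of nodes here is comparable, so the maximum-logical-concurrency condition is vacuous and $\func\in\mathbb{F}_{max}$; yet $(a,c)\notin E'$, so your $\Psi(\func)$ contains $(x_a,y_c)\notin E_B$ and is not a matching of $B$. Indeed no matching of $B$ induces the partition $\{\{a,c\},\{b\}\}$, so no choice of inverse can succeed. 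The paper's own proof stumbles on the same example from the other side: it defines the candidate preimage $m_\func$ as the set of $E'$-edges internal to a class (here $m_\func=\emptyset$), which \emph{is} a matching, but then asserts $\Phi(m_\func)=\func$ ``by definition,'' which fails for this $\func$. The statement that is actually true --- and all that the proof of Theorem~\ref{last-theorem} requires --- is that $\Phi$ is a bijection onto the assignments whose stream classes are vertex sets of directed paths in $G'$; the excluded assignments never win because the count $|E'|-|Q(\func)|$ of Lemma~\ref{thm:lem4} applies to every $\func\in\mathbb{F}_{max}$ and $|Q(\func)|\le |M|$ for a maximum matching $M$. If you pursue your plan, restrict the codomain accordingly rather than attempting to prove the bridging claim, since it already fails on a three-node graph.
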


\begin{mythm}
	\label{theorem3}
	For any matching $m$ of the bipartite graph $B$, the minimum number of synchronizations required for the corresponding stream assignment $\Phi(m)$ is $|E'|-|m|$.
\end{mythm}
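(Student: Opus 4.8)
The plan is to bound the minimum from both sides: exhibit a valid synchronization scheme for $\Phi(m)$ with exactly $|E'|-|m|$ synchronizations, and then show that no valid scheme can use fewer. Throughout I identify the matching $m$ of $B$ with its image $\{(v_i,v_j):(x_i,y_j)\in m\}\subseteq E'$, write $\mathrm{TC}(\cdot)$ for the transitive closure of an edge relation, and let $R=\mathrm{TC}(E')$ be the reachability relation of $G$ (equivalently of $G'$). The structural fact I would use repeatedly is that, since $m$ is a matching, each vertex of $G'$ is the head of at most one $m$-edge and the tail of at most one, so the $m$-edges decompose the DAG $G'$ into vertex-disjoint directed paths (isolated vertices being trivial paths), and by construction these path vertex sets are exactly the blocks of the partition produced in Step~4. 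Hence under the stream assignment $\Phi(m)$ of Theorem~\ref{theorem2} each stream holds the vertices of one such path, the only admissible FIFO submission order on a stream is the path order (any other order would either violate a dependency internal to the path or create a cross-stream cycle), and therefore the ordering implied by the intra-stream FIFO queues is exactly $F:=\mathrm{TC}(m)$. For the upper bound, take the synchronization set $D:=E'\setminus m$, one synchronization per unmatched edge of $G'$; then the total enforced order is $\mathrm{TC}(D\cup F)=\mathrm{TC}\bigl((E'\setminus m)\cup m\bigr)=\mathrm{TC}(E')=R$, which is acyclic and contains every dependency of $G$, so the scheme is schedulable and correct and uses $|E'\setminus m|=|E'|-|m|$ synchronizations.

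For the lower bound, let $D$ be the set of synchronization edges of an arbitrary valid scheme for $\Phi(m)$, and let $F$ again be the intra-stream FIFO ordering. Since ``completes-before-starts'' composes transitively, correctness of the scheme means that the enforced order $\mathrm{TC}(D\cup F)=\mathrm{TC}(D\cup m)$ contains $R$. In the natural model a synchronization is placed on an edge $(X,Y)$ with $X$ an ancestor of $Y$, so $D\cup m\subseteq R$ and hence $\mathrm{TC}(D\cup m)\subseteq R$; thus $\mathrm{TC}(D\cup m)=R=\mathrm{TC}(E')$. Now I would invoke uniqueness of the minimum equivalent graph of a finite DAG --- equivalently, the fact that the transitive reduction of a DAG is contained in every graph having the same transitive closure: from $\mathrm{TC}(D\cup m)=\mathrm{TC}(G')$ it follows that $E'\subseteq D\cup m$, and since $m\subseteq E'$ this gives $E'\setminus m\subseteq D$, i.e.\ $|D|\ge|E'|-|m|$. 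Together with the upper bound, the minimum number of synchronizations required for $\Phi(m)$ equals $|E'|-|m|$.

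The transitive-closure manipulations and the upper-bound construction are routine; the crux is the lower bound, and in particular the two observations that make it work: that the intra-stream FIFO order contributes no more than $\mathrm{TC}(m)$ to the enforced ordering --- so matched edges ``pay for themselves'' but yield no further free dependencies --- and that, via minimality and uniqueness of the transitive reduction, the counting question collapses to the containment $E'\subseteq D\cup m$. I would also be explicit about the synchronization model: if synchronizations on incomparable pairs were allowed, one needs the small extra remark that such a synchronization only tightens the enforced order and so cannot reduce the number needed, or else one simply restricts to synchronizations on edges of $G'$, which is the regime the algorithm actually uses.
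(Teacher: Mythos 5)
Your proof is correct, but it reaches the bound by a genuinely different route from the paper. The paper argues locally: it defines $Q(\func)$ as the set of nodes having a same-stream parent in $G'$, proves $min_{sync}(G',\func)=|E'|-|Q(\func)|$ by counting incoming MEG edges at each node (using that each edge of $E'$ is the \emph{unique} path between its endpoints, so every cross-stream edge of $E'$ must itself carry a synchronization, and that the same-stream parent is unique under maximum logical concurrency), and then exhibits an explicit bijection between $Q(\Phi(m))$ and $m$. You instead argue globally: you observe that the blocks of Step~4 are exactly the maximal $m$-paths, so the intra-stream FIFO order contributes precisely $\mathrm{TC}(m)$, take $E'\setminus m$ for the upper bound, and for the lower bound collapse everything to the containment $E'\subseteq D\cup m$ via the minimality and uniqueness of the transitive reduction of a finite DAG. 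One thing worth flagging: your correctness criterion ($\mathrm{TC}(D\cup F)\supseteq R$) is a more permissive model than the paper's formal notion of a safe synchronization plan, under which a synchronization is required on (a path realizing) each cross-stream adjacent pair; on the MEG that definition forces every cross-stream edge of $E'$ into $\Lambda$ outright, so the paper's lower bound is nearly immediate and your transitive-reduction machinery is doing extra work to handle synchronizations that compose transitively. That extra generality is the main thing your approach buys --- it remains valid even if chained synchronizations are allowed to stand in for a direct one --- while the paper's approach buys the intermediate formula $|E'|-|Q(\func)|$ for an arbitrary maximum-concurrency assignment. Your closing caveat about synchronizations on incomparable pairs is the only loose end (the ``only tightens the order'' remark is not quite an argument), but since the paper restricts plans to edges of $G$ this does not affect the result.
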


\begin{mythm}
	\label{last-theorem}
	For a maximum matching $M$ of the bipartite graph $B$, $\Phi(M)$ is a stream assignment that satisfies maximum logical concurrency and requires minimum number of synchronizations among the stream assignments satisfying maximum logical concurrency.
\end{mythm}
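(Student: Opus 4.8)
The plan is to derive the statement as a direct corollary of Theorems~\ref{theorem1}, \ref{theorem2}, and \ref{theorem3}, so that the proof is essentially a bookkeeping argument chaining the three characterizations together; no fresh combinatorial work should be needed.

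First I would check that $\Phi(M)$ is a legitimate candidate. By Theorem~\ref{theorem2}, $\Phi(M)$ is a stream assignment satisfying maximum logical concurrency on $G'$, and by the first part of Theorem~\ref{theorem1} this is equivalent to satisfying maximum logical concurrency on $G$. Next, by Theorem~\ref{theorem3}, the minimum number of synchronizations required for $\Phi(M)$ on $G'$ equals $|E'| - |M|$, and by the second part of Theorem~\ref{theorem1} this is also the minimum number of synchronizations required for $\Phi(M)$ on $G$.

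Then I would argue optimality. Let $\func$ be an arbitrary stream assignment satisfying maximum logical concurrency on $G$. By Theorem~\ref{theorem1} it also satisfies maximum logical concurrency on $G'$, so since $\Phi$ is a one-to-one correspondence onto exactly that set, there is a matching $m$ of $B$ with $\func = \Phi(m)$. Applying Theorem~\ref{theorem3} again, the minimum number of synchronizations for $\func$ (on $G'$, hence on $G$ by Theorem~\ref{theorem1}) is $|E'| - |m|$. Since $M$ is a maximum matching, $|m| \le |M|$, hence $|E'| - |M| \le |E'| - |m|$; that is, $\Phi(M)$ requires no more synchronizations than $\func$. As $\func$ was arbitrary among the assignments with maximum logical concurrency, $\Phi(M)$ attains the minimum, which completes the proof.

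The only point requiring care — and the closest thing to an obstacle — is making sure the quantifier in the optimality claim ranges over the correct set: the comparison must be against \emph{all} stream assignments with maximum logical concurrency on $G$, and it is precisely the combination of the equivalence in Theorem~\ref{theorem1} and the surjectivity of $\Phi$ in Theorem~\ref{theorem2} that lets us write every such assignment as $\Phi(m)$, thereby reducing ``minimize the number of synchronizations over valid assignments'' to ``maximize $|m|$ over matchings of $B$,'' which a maximum matching solves by definition.
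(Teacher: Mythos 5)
Your proposal is correct and follows essentially the same route as the paper's proof: both chain Theorem~\ref{theorem1} (to pass between $G$ and $G'$), Theorem~\ref{theorem2} (to identify every maximally concurrent assignment with a matching of $B$), and Theorem~\ref{theorem3} (to reduce minimizing synchronizations to maximizing $|m|$). The only cosmetic difference is that the paper phrases the optimality step as a proof by contradiction while you argue it directly; your version is, if anything, slightly more explicit about the quantifier over competing assignments.
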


\vspace{-11pt}
\begin{proof}[\textbf{\textit{Proof of Theorem~\ref{last-theorem}}}]
	Based on Theorem~\ref{theorem1}, the algorithm derives the desired stream assignment from $G'$ instead of $G$.
	From Theorem~\ref{theorem2}, it follows that $\Phi(M)$ is a stream assignment with maximum logical concurrency.
	Now, suppose that there exists a stream assignment $g:V \to S$ that satisfies maximum logical concurrency with strictly less number of synchronizations than that of $\Phi(M)$.
	By Theorem~\ref{theorem2} and Theorem~\ref{theorem3}, $g$ corresponds to some matching $\Phi^{-1}(g)$ of $B$ such that $|M| < |\Phi^{-1}(g)|$.
	The inequality, however, is contradictory to the definition of $M$ since $M$ is a maximum matching of the bipartite graph $B$.
	Thus, Theorem~\ref{last-theorem} follows.
\end{proof}
\vspace{-5pt}

\vspace{-8pt}
\section{Evaluation}
\label{sec:eval}
\vspace{-5pt}

\paragraph{Experimental Setup}
We implement \system on PyTorch v1.4 with CUDA 10.2 and cuDNN 8.0.2.
For evaluation, we use an NVIDIA V100 GPU along with 2.10GHz Intel Xeon CPU E5-2695 v4.

To evaluate DL inference, we compare \system with popular DL frameworks, PyTorch, TorchScript and Caffe2, as well as state-of-the-art inference systems, TensorRT (v7.1)~\cite{tensorrt} and TVM (v0.6.1)~\cite{tvm}.
To evaluate DL training, \system is compared with PyTorch and TorchScript.
Note that TensorRT and TVM employ graph optimizations (e.g., aggressive operator fusion) and kernel selection/tuning, which are orthogonol to our idea.
In \system, we also implement the operator fusion (a subset of TensorRT’s) and basic kernel selection, which chooses the faster implementation of convolution operators between cuDNN~\cite{cudnn} and PyTorch's native implementation.
More details on the evaluation setting are provided in Appendix~\ref{sec:setup}.

\begin{figure*}[t]
	\centering
	\includegraphics[width=\linewidth]{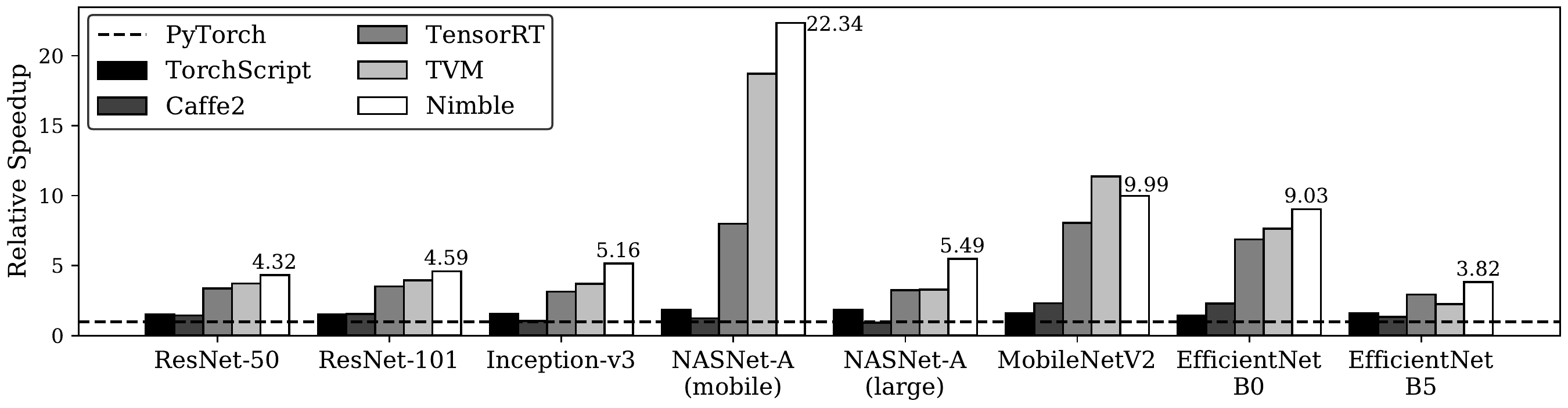}
	\vspace{-18pt}
	\caption{Relative inference speedup of \system and other systems (batch size 1).
		We use various neural networks~\cite{resnet,mobilenetv2, inception, efficientnet, nasnet}, all trained on ImageNet~\cite{imagenet}.}
	\label{fig:inference}
	\vspace{-15pt}
\end{figure*}

\vspace{-7pt}
\subsection{Inference Latency}
\vspace{-4pt}

Figure~\ref{fig:inference} presents the relative inference speed of \system and the other systems.
We set PyTorch as the baseline.
The result shows that \system outperforms PyTorch, TorchScript and Caffe2 significantly.
The primary reason for this performance gap is the substantial scheduling overhead, which makes GPU idle for most of the time.
In addition, since the DL frameworks hardly utilize parallelism among operators in a neural network, the performance gap widens in the neural networks with parallelizable structures like NASNet-A (mobile) (up to  22.34$\times$).
\system also shows higher performance than TensorRT on all of the tested neural networks, by up to \totrt$\times$ (NASNet-A (mobile)).
Moreover, \system surpasses performance of TVM in most cases, by up to \totvm$\times$ (EfficientNet-B5).
The only exception is MobileNetV2.
TVM spends two days in kernel tuning (1500 trials for each convolution), and finds much faster GPU kernels for MobileNetV2 than those of cuDNN and PyTorch.
Results on different GPUs are provided in Appendix~\ref{sec:gpus}.

\begin{table}[t]
	\begin{minipage}{0.39\linewidth}
		\centering
		\footnotesize
		\begin{tabular}{c@{\hskip 0.05in}c@{\hskip 0.05in}c@{\hskip 0.05in}c@{\hskip 0.0in}}
			\toprule
			\textbf{Architecture} &  \textbf{Speedup} & \textbf{Deg.} & \textbf{\#MACs}\\
			\midrule
			\multicolumn{1}{l}{Inception-v3} & 1.09$\times$ & 6 & 5.7B \\
			\multicolumn{1}{l}{DARTS} & 1.37$\times$ & 7 & 0.5B \\
			\multicolumn{1}{l}{AmoebaNet} & 1.45$\times$ & 11 & 0.5B \\
			\multicolumn{1}{l}{NASNet-A (M)} & 1.88$\times$ & 12 & 0.6B \\
			\multicolumn{1}{l}{NASNet-A (L)} & 1.31$\times$ & 15 & 23.9B\\
			\bottomrule
		\end{tabular}
		\vspace{3pt}
		\caption{Impact of the multi-stream execution of \system on DL inference, compared to its single-stream counterpart. Deg. stands for maximum degree of logical concurrency of each architecture.
		}
		\label{tab:multi-stream}
	\end{minipage}
	\hfill
	\begin{minipage}{0.59\linewidth}
		\centering
		\includegraphics[width=0.97\linewidth]{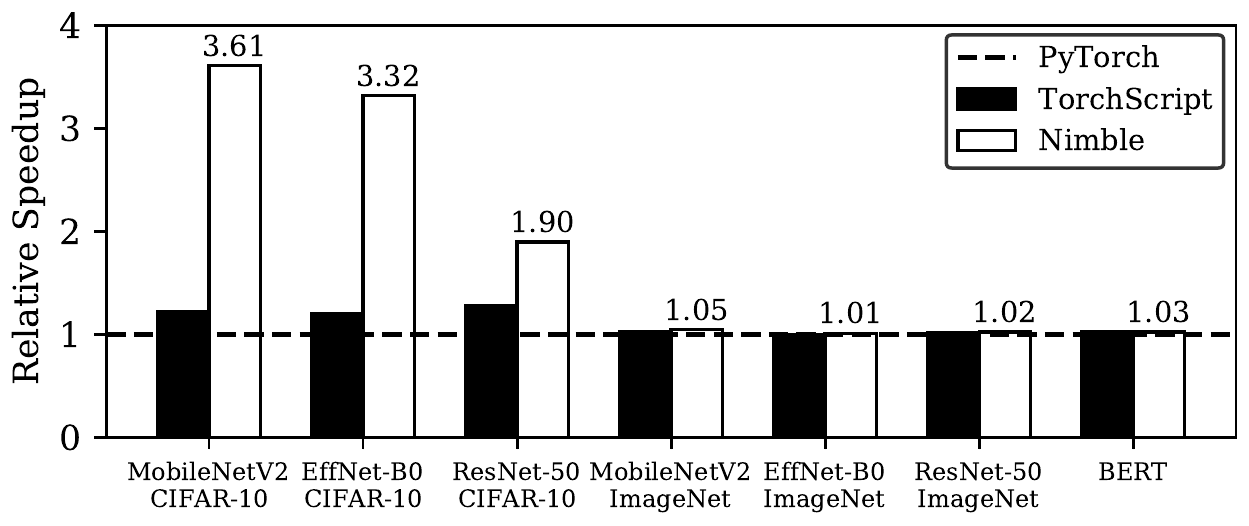}
		\captionof{figure}{Relative training speedup of \system and TorchScript. All neural networks~\cite{bert, resnet, mobilenetv2, efficientnet} are trained with batch size 32.}
		\label{fig:training}
	\end{minipage}
	\vspace{-20pt}
\end{table}

\vspace{-7pt}
\subsection{Impact of Multi-stream Execution}
\vspace{-4pt}

We select a set of deep neural networks with parallelizable structures and investigate the impact of the multi-stream execution of \system on the inference latency of such neural networks.
Table~\ref{tab:multi-stream} shows the relative speedup of the multi-stream execution compared to the single-stream execution of \system.
The result indicates that multi-stream execution of \system can accelerate DL inference by up to 1.88$\times$ compared to the single-stream execution, and that \system exploits logical concurrency to the degree (15) that programmers cannot effectively assign and synchronize the streams manually.

In addition, we observe that the acceleration rates considerably differ across the neural networks.
Neural networks with a higher degree of logical concurrency tend to benefit more from the multi-stream execution.
For example, the neural network with the lowest degree of logical concurrency (Inception-v3) gains the smallest speedups.
Also, we can see the trend that neural networks with less amount of computation are more likely to be accelerated by the multi-stream execution.
For instance, although NASNet-A (large) exhibits higher degree of logical concurrency than NASNet-A (mobile), the former gets limited speedup compared to the latter because the former consists of kernels with a large number of multiply-and-accumulate (MAC) operations, each of which occupies most of the GPU resources.
The comparison between Inception-v3 and DARTS reports the same tendency.

\vspace{-6pt}
\subsection{Training Throughput}
\vspace{-2pt}

Figure~\ref{fig:training} shows the performance of \system on neural network training.
Since training of a neural network is commonly conducted with large batch sizes, GPU scheduling overhead imposed during training is less pronounced, and the impact of multi-stream execution is also limited.
Accordingly, in the results of ResNet~\cite{resnet} on the ImageNet~\cite{imagenet} dataset and BERT~\cite{bert}, \system shows marginal performance improvement.
However, \system still brings up substantial speedup when neural networks are trained with small-size inputs (e.g., low-resolution images).
For example, in the field of computer vision, the CIFAR-10~\cite{cifar} dataset is widely used among researchers and many neural networks are trained on the dataset.
Figure~\ref{fig:training} shows \system's performance when neural networks~\cite{resnet, mobilenetv2, efficientnet} are trained on CIFAR-10.
The result implies that the scheduling overhead can still be a major performance bottleneck even in training.
\system eliminates such inefficiency and increases training throughputs by up to 3.61$\times$.
Results on different batch sizes are presented in Appendix~\ref{sec:batchsizes}.
\vspace{-8pt}

\section{Related Works}

\vspace{-5pt}

There have been a body of works on the system-level optimization of DL inference and training.
For example, DL compilers~\cite{xla, tvm, ngraph, glow, tensor-comprehension} have been proposed to generate optimized codes for target hardware.
These works take different approach from \system in that they aim to reduce the time spent on GPU tasks whereas \system tackles the inefficiencies in the scheduling of GPU tasks.

The core ideas of \system can be compared with some previous works.
First, in an attempt to reduce the scheduling overhead, TensorFlow recently introduced a new runtime~\cite{tfrt} that has a thin operator dispatch routine.
While redesigning a runtime stack costs tremendous engineering efforts, the AoT scheduling of \system provides an automated way to avoid the scheduling overhead.
Second, although the pre-run process of \system is similar to the tracing of TorchScript, they differ in the purpose and the target of tracing process.
In the tracing of TorchScript, DL operator calls are recorded to construct a computation graph, which is used for serialization and graph-level optimization.
Meanwhile, \system records GPU tasks during the pre-run process to perform the scheduling procedure once.
Lastly, in comparison to HiveMind~\cite{hivemind} that has a parallel runtime for multi-model workloads, the multi-stream execution of \system parallelizes operators in a single model with more sophisticated algorithm.

\vspace{-8pt}

\section{Conclusion}
\vspace{-5pt}

We introduce \system, a high-speed DL execution engine for static neural networks.
We first show two problems of the run-time scheduling of GPU tasks: scheduling overhead and serial execution.
\system minimizes the scheduling overhead by finishing the scheduling procedure ahead of time before executing the GPU tasks at run time.
Moreover, \system schedules independent GPU tasks to be executed in parallel, further boosting its performance.
Our evaluation on various neural networks shows that \system outperforms popular DL frameworks and state-of-the-art inference systems.
\system is publicly available at \nimbleurl.

\section*{Broader Impact}
Our work aims to accelerate the execution of neural networks in general, and is not associated with a specific application.
Furthermore, our technique does not affect the output values of neural networks (e.g., image classification labels, object detection bounding boxes, computer-generated text, etc.) nor the weights of neural networks.
Therefore, we believe our work has no significant impact on any particular audience from an ethical/societal perspective, at the application-level.

\section*{Acknowledgments}
We thank the anonymous reviewers for their valuable comments. 
We also thank Joo Seong Jeong, Gyewon Lee, Jeongyoon Eo and Jae-Won Chung for their fruitful feedback.
This work was supported by the Institute for Information \& communications Technology Planning \& Evaluation (IITP) grant funded by the Korea government (MSIT) (No.2015-0-00221, Development of a Unified High-Performance Stack for Diverse Big Data Analytics), the ICT R\&D program of MSIT/IITP (No.2017-0-01772, Development of QA systems for Video Story Understanding to pass the Video Turing Test), and Samsung Advanced Institute of Technology.

\bibliographystyle{plain}
\small
\bibliography{nimble}
\newpage

\begin{appendices}
	\section{Proofs on the Stream Assignment Algorithm of \system}
	\label{sec:proof}
	
	In this section, we provide detailed proofs on the theorems presented in Section 4.2.
	
	\paragraph{Problem Setting}
	We assume that the computation graph of a neural network is given.
	The computation graph is represented as a finite DAG $G = (V, E)$.
	Also, we are given a set of GPU streams $S = \{s_1, s_2, \cdots, s_{|V|}\}$.
	Algorithm 1 must find a stream assignment $\func:V \to S$, which satisfies the following conditions:
	
	\begin{itemize}[leftmargin=7pt,topsep=0pt]
		\item \textbf{Maximum logical concurrency.} If $u, v \in V$ and there exists no path between $u$ and $v$ in $G$, then $\func(u) \neq \func(v)$.
		\item \textbf{Minimum number of synchronizations.} Among such functions, $\func$ incurs the smallest number of synchronizations across streams.
	\end{itemize}
	
	Here we define important concepts and terminologies used in the following proofs.
	
	\begin{definition}
		\normalfont
		For a graph $G = (V, E)$, a \textbf{synchronization plan} $\Lambda \subseteq E$ is a set of edges on which synchronizations are planned to be performed (regardless of stream assignments).
	\end{definition}
	
	\begin{definition}
		\normalfont
		For a stream assignment $\func$ on $G = (V, E)$, a synchronization plan $\Lambda \subseteq E$ is \textbf{safe} if it satisfies the following condition.
		\vspace{-5pt}
		\begin{itemize}[leftmargin=6pt,label={}]
			\item \textit{For any $(u, v) \in E$, $\func(u) = \func(v)$ or there exists a path $P \subseteq E$ from $u$ to $v$ such that $ P \cap \Lambda \neq \emptyset$.}
		\end{itemize}
		\vspace{-5pt}
		In other words, the plan $\Lambda$ is safe when the execution order between every pair of adjacent nodes $u$ and $v$ is guaranteed: either by assigning them to the same streams or by performing a synchronization somewhere after $u$ and before $v$.
	\end{definition}
	
	\begin{notation}
		\normalfont
		We denote by $min_{sync}(G, \func)$ the minimum number of synchronizations required when applying $\func$ to the graph $G$.
		That is,
		\begin{equation*}
		min_{sync}(G, \func) = min\{|\Lambda| \in \mathbb{Z}_{\geq 0} \;|\; \Lambda \subseteq E \text{ is safe for $\func$ on $G$}\}
		\end{equation*}
	\end{notation}

	\subsection{Proof of Theorem 1}
	Theorem 1 includes two statements, which are presented here as Theorem~\ref{thm:1-1} and Theorem\ref{thm:1-2}, respectively.
	
	\begin{customthm}{1-1}
		\label{thm:1-1}
		A stream assignment $\func$ satisfies maximum logical concurrency on a computation graph $G$ if and only if $\func$ satisfies maximum logical concurrency on the minimum equivalent graph $G'$.
	\end{customthm}
	
	\begin{proof}[\textbf{\textit{Proof of Theorem~\ref{thm:1-1}}}]
		By definition of MEG, $G'$ has the same reachability relation as $G$.
		Thus, if no path exists between a pair of nodes in $G$, then there is no path between the same pair of nodes in $G'$, and vice versa.
	\end{proof}
	
	Prior to the proof of Theorem~\ref{thm:1-2}, we describe and prove Lemma~\ref{thm:lem1} and Lemma~\ref{thm:lem2}.
	
	\begin{lemma}
		\label{thm:lem1}
		For a minimum equivalent graph $G' = (V, E')$ of $G$, if $(u, v) \in E'$, then $\{(u, v)\}$ is the only path in $G$ from $u$ to $v$.
	\end{lemma}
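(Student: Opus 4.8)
The plan is to prove the uniqueness claim by contradiction, showing that a second directed path from $u$ to $v$ in $G$ would make the edge $(u,v)$ redundant in $G'$, contradicting the minimality of the minimum equivalent graph. Since $G'$ is a subgraph of $G$, the edge $(u,v)$ also lies in $E$, so $\{(u,v)\}$ is certainly one path from $u$ to $v$ in $G$; the content of the lemma is that there is no other.

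First I would assume there is another directed path $P \subseteq E$ from $u$ to $v$ with $P \neq \{(u,v)\}$. Then $P$ has length at least two, so it passes through an intermediate vertex $w$, and since $G$ is acyclic we must have $w \neq u$ and $w \neq v$ (a repeated endpoint on $P$ would close a directed cycle). Thus in $G$ the vertex $u$ reaches $w$ and $w$ reaches $v$, with $w$ distinct from both endpoints.

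Next I would transfer this to $G'$ using the defining property that $G'$ has the same reachability relation as $G$: there is a path $Q_1 \subseteq E'$ from $u$ to $w$ and a path $Q_2 \subseteq E'$ from $w$ to $v$. The key step is to argue that neither $Q_1$ nor $Q_2$ uses the edge $(u,v)$. If $Q_1$ used $(u,v)$, then, being a simple path out of $u$, it would have to start with that edge, giving that $v$ reaches $w$ in $G'$; combined with $w$ reaches $v$ this produces a directed cycle through $v$ (nontrivial since $w \neq v$), impossible in the acyclic graph $G'$. Symmetrically, if $Q_2$ used $(u,v)$ it would have to end with that edge, giving that $w$ reaches $u$, hence a nontrivial cycle through $u$. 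Therefore $Q_1$ and $Q_2$ both lie in $E' \setminus \{(u,v)\}$, and their concatenation is a path from $u$ to $v$ avoiding $(u,v)$. Consequently $(V, E' \setminus \{(u,v)\})$ has the same reachability relation as $G'$ — any $G'$-path through $(u,v)$ can be rerouted through $Q_1$ then $Q_2$, and deleting an edge never creates new reachability — yet it has strictly fewer edges than $G'$. This contradicts the minimality of $G'$, so no path $P$ can exist.

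The step I expect to be the main obstacle is precisely the claim that the rerouting paths $Q_1$ and $Q_2$ chosen inside $G'$ can be taken to avoid the edge $(u,v)$; this is where acyclicity of $G$ (inherited by $G'$) is indispensable, and it is the point most easily glossed over, since one must rule out the possibility that the witnessing $G'$-paths secretly reuse the very edge whose removal we are trying to justify.
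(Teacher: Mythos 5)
Your proof is correct and takes essentially the same route as the paper's: assume a second path through some intermediate vertex $w$, use the fact that $G'$ preserves reachability to reroute via $w$, and conclude that $E'\setminus\{(u,v)\}$ still realizes the same reachability relation, contradicting minimality. The one place you go beyond the paper's own argument is in explicitly verifying (via acyclicity) that the witnessing paths $Q_1$ and $Q_2$ in $G'$ cannot themselves use the edge $(u,v)$ --- a step the paper silently assumes --- so your version is, if anything, the more complete one.
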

	
	\begin{proof}[\textbf{\textit{Proof of Lemma~\ref{thm:lem1}}}]
		We will prove by contradiction.
		Suppose there is another path $P \subseteq E$ from $u$ to $v$ that goes through $w \in V$.
		By the definition of MEG, $G'$ must preserve reachability from $u$ to $w$ and $w$ to $v$.
		Consequently, removing the edge $(u, v)$ from $E'$ does not change the reachability relation.
		This is contradictory to the definition of MEG, because we can construct another subgraph $G^* = (V, E' \setminus \{(u, v)\})$, where the number of edges of $G^*$ is smaller than that of $G'$ while preserving the reachability relation.
	\end{proof}
	
	\begin{lemma}
		\label{thm:lem2}
		A synchronization plan $\Lambda \subseteq E$ is safe for a stream assignment $\func$ on $G$ if and only if $\Lambda$ is safe for $\func$ on $G'$.
	\end{lemma}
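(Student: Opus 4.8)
The plan is to prove the biconditional by establishing each implication separately, leaning on two facts already available: $G' = (V, E')$ is a subgraph of $G$ with $E' \subseteq E$ and the same reachability relation, and Lemma~\ref{thm:lem1}, which says every edge of $E'$ is the unique $G$-path between its endpoints.

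For the ``only if'' direction I would assume $\Lambda$ is safe for $\func$ on $G$ and check the safety condition on $G'$ at an arbitrary edge $(u, v) \in E'$. Since $E' \subseteq E$, this edge lies in $G$ as well, so safety on $G$ gives either $\func(u) = \func(v)$, which already discharges the obligation, or a path $P \subseteq E$ from $u$ to $v$ meeting $\Lambda$. By Lemma~\ref{thm:lem1} the only such path is $\{(u, v)\}$ itself, forcing $(u, v) \in \Lambda$; and $\{(u, v)\} \subseteq E'$, so this witnesses safety on $G'$ at $(u, v)$.

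For the ``if'' direction I would assume $\Lambda$ is safe for $\func$ on $G'$ and verify the safety condition on $G$ at an arbitrary edge $(u, v) \in E$. If $(u, v) \in E'$ the condition transfers verbatim, because a path inside $E'$ is a path inside $E$. If $(u, v) \notin E'$, I would use reachability preservation to pick a path $u = w_0 \to w_1 \to \cdots \to w_k = v$ with all edges in $E'$ (and $k \ge 2$, since a length-one such path would be the edge $(u,v)$ itself). Applying safety on $G'$ to each edge $(w_{i-1}, w_i)$: if $\func(w_{i-1}) = \func(w_i)$ for every $i$ then $\func(u) = \func(v)$ and we are done; otherwise some edge $(w_{i-1}, w_i)$ admits a path $P_i \subseteq E'$ from $w_{i-1}$ to $w_i$ with $P_i \cap \Lambda \ne \emptyset$, and splicing $w_0 \to \cdots \to w_{i-1}$, then $P_i$, then $w_i \to \cdots \to w_k$ produces a walk from $u$ to $v$ inside $E'$ that meets $\Lambda$. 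Since $G$ is acyclic this walk revisits no vertex, hence is an honest simple path in $E \supseteq E'$, which is exactly what the safety condition on $G$ requires at $(u, v)$.

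The main obstacle will be the ``if'' direction: a synchronization plan that is safe on the sparse graph $G'$ must be shown safe on the denser $G$, so for every ``shortcut'' edge of $E \setminus E'$ one has to reassemble a $\Lambda$-hitting path out of the $E'$-path joining its endpoints and argue that the splice is a legitimate (vertex-simple) path — this is where acyclicity of $G$ is essential. The ``only if'' direction, in contrast, is essentially immediate once Lemma~\ref{thm:lem1} is in hand.
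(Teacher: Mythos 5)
Your proof is correct and takes essentially the same route as the paper's: the forward direction uses Lemma~\ref{thm:lem1} to collapse the safety condition on an $E'$-edge to the single requirement $(u,v)\in\Lambda$, and the backward direction replaces each shortcut edge of $E\setminus E'$ by an $E'$-path and propagates safety along its constituent edges. The only difference is stylistic — you argue directly and splice in the $\Lambda$-hitting witness, whereas the paper argues by contradiction and walks the $E'$-path iteratively to locate the offending edge.
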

	
	\begin{proof}[\textbf{\textit{Proof of Lemma~\ref{thm:lem2}}}]
		We first show that if $\Lambda$ is safe for $\func$ on $G$, then $\Lambda$ is safe for $\func$ on $G'$.
		We will prove by contradiction.
		Suppose $\Lambda$ is safe for $\func$ on $G$ but not safe for $\func$ on $G'$.
		Then there is an edge $(u, v) \in E'$ such that $\func(u) \neq \func(v)$ and $(u, v) \notin \Lambda$.
		Since $G'$ is the MEG of $G$ and $(u, v) \in E'$, $\{(u, v)\}$ is the only path in $G$ from $u$ to $v$ by Lemma~\ref{thm:lem1}.
		Consequently, $(u, v) \in E$ is an edge that $\func(u) \neq \func(v)$ and every path in $G$ from $u$ to $v$ does not include any edge in $\Lambda$, which is contradictory to the assumption that $\Lambda$ is safe for $\func$ on $G$.
		
		Next, we show that if $\Lambda$ is safe for $\func$ on $G'$, then $\Lambda$ is safe for $\func$ on $G$.
		We will prove by contradiction.
		Suppose $\Lambda$ is safe for $\func$ on $G'$ but not safe for $\func$ on $G$.
		Then there is an edge $(u, v) \in E \setminus E'$ such that $\func(u) \neq \func(v)$ and every path from $u$ to $v$ in $G$ does not include any edge in $\Lambda$.
		Since $(u, v) \notin E'$ and $G'$ preserves the same reachability relation as $G$, there must exist a node $w_1 \in V$ such that $(u, w_1) \in E'$ and a path from $w_1$ to $v$ exists in $G'$.
		As every path from $u$ to $v$ in $G$ does not include any edge in $\Lambda$, $\func(u) = \func(w_1)$ must hold to meet the assumption that $\Lambda$ is safe for $\func$ on $G'$.
		Then, we have two vertices $w_1$ and $v$ such that $\func(w_1) \neq \func(v)$ and every path from $w_1$ to $v$ in $G$ does not include any edge in $\Lambda$.
		Since $G$ is a finite DAG, if we repeat this process, we end up with two vertices $w_n$ and $v$ with the following conditions: $(w_n, v) \in E'$, $\func(w_n) \neq \func(v)$, and $(w_n, v) \notin \Lambda$, which contradicts the assumption that $\Lambda$ is safe for $\func$ on $G'$.
	\end{proof}
	
	\begin{customthm}{1-2}
		\label{thm:1-2}
		For any stream assignment $\func$ that satisfies maximum logical concurrency on $G$, the following equation holds.
		\begin{equation*}
		min_{sync}(G,\func) = min_{sync}(G',\func).
		\end{equation*}
		That is, the minimum number of synchronizations required for $\func$ on $G$ is equal to the minimum number of synchronizations required for $\func$ on $G'$.
	\end{customthm}
	
	\begin{proof}[\textbf{\textit{Proof of Theorem~\ref{thm:1-2}}}]
		This directly follows from Lemma~\ref{thm:lem2}.
	\end{proof}
	
	\subsection{Proof of Theorem 2}
	Prior to the proof of Theorem~\ref{thm:2}, we clarify the meaning of \textit{the set of the stream assignments}.
	Let $F = \{f \;|\; f: V \to S\}$.
	We can define an equivalence relation $\sim$ on $F$ as follows.
	\begin{itemize}[leftmargin=6pt,label={}]
		\item \textit{For stream assignments $g, h \in F$, $g \sim h$ if and only if $g = \sigma \circ h$ for some permutation $\sigma$ over $S$.}
	\end{itemize}
	Note that any permutation on $S$ does not affect the degree of logical concurrency and the number of synchronizations of a stream assignment. 
	In other words, for stream assignments $g, h \in F$ such that $g \sim h$, it directly follows that 1) $g$ meets maximum logical concurrency if and only if $h$ meets maximum logical concurrency, and 2) $min_{sync}(G', g) = min_{sync}(G', h)$.
	Therefore, if two stream assignments can be converted to one another by some permutation on $S$, we do not differentiate the two stream assignments.
	Furthermore, we do not differentiate a stream assignment $f \in F$ from its equivalence class $[f]$, because we only consider which nodes are mapped to the same streams, but do not consider the exact value of $f$.
	From now on, we \textit{identify} $[\func]$, the equivalence class of $\func$, as $\func$.
	
	\begin{remark}
		\normalfont
		The set of the stream assignments $\mathbb{F}$ is as follows.
		\begin{equation*}
		\mathbb{F} = \{[f] \;|\; f: V \to S\}
		\end{equation*}
	\end{remark}
	
	\begin{customthm}{2}
		\label{thm:2}
		Let $\mathbb{M}$ be the set of the matchings of the bipartite graph $B$ obtained from $G'$, and $\mathbb{F}_{max}$ be the set of the stream assignments that satisfy maximum logical concurrency on $G'$.
		Then one-to-one correspondence $\Phi: \mathbb{M} \to \mathbb{F}_{max}$ exists.
	\end{customthm}
	
	\begin{proof}[\textbf{\textit{Proof of Theorem~\ref{thm:2}}}]
		We construct $\Phi$ according to Step 4 and Step 5 of Algorithm 1.
		
		First, we show that $\Phi(m) \in \mathbb{F}_{max}$, i.e., $\Phi(m)$ meets maximum logical concurrency, for any matching $m \in \mathbb{M}$.
		We prove this by contradiction.
		Choose an arbitrary matching $m \in \mathbb{M}$ and suppose that $\Phi(m)$ does not satisfy maximum logical concurrency.
		In other words, suppose that a pair of nodes $v_i, v_j \in V$ exists such that there is no path from $v_i$ to $v_j$ in $G'$ but $\Phi(m)(v_i) = \Phi(m)(v_j)$.
		Since $v_i$ and $v_j$ are mapped to the same stream, it follows from Step 4 that there exists a sequence of edges $\{(x_i, y_{k_1}), (x_{k_1}, y_{k_2}), \cdots, (x_{k_l}, y_j)\} \subseteq m$.
		This, in turn, means that there exists a path $\{(v_i, v_{k_1}), (v_{k_1}, v_{k_2}), \cdots, (v_{k_l}, v_j)\} \subseteq E'$, which is contradictory to the assumption.
		Therefore, for any $m \in \mathbb{M}$, $\Phi(m)$ meets maximum logical concurrency.
		
		Secondly, we show that $\Phi$ is injective.
		Again, we will prove by contradiction.
		Suppose that $\Phi(m_1) = \Phi(m_2)$ for some matchings $m_1 \neq m_2$.
		Since $m_1 \neq m_2$, there exists an edge $(x_i, y_j) \in E_B$ that is included in either of the two matchings.
		Without loss of generality, assume $(x_i, y_j) \in m_1$.
		Then the equation $\Phi(m_1)(v_i) = \Phi(m_1)(v_j)$ holds, and so does the equation $\Phi(m_2)(v_i) = \Phi(m_2)(v_j)$.
		The latter equation implies that there exists a sequence of edges $\{(x_i, y_{k_1}), (x_{k_1}, y_{k_2}), \cdots, (x_{k_l}, y_j)\} \subseteq m_2$.
		This, in turn, means that a path from $v_i$ to $v_j$ other than than edge $(v_i, v_j)$ exists in $E'$, which is contradictory to the assumption that $G'$ is the MEG of the graph $G$ by Lemma~\ref{thm:lem1}.
		
		Lastly, we demonstrate that $\Phi$ is surjective.
		Assume that an arbitrary stream assignment $\func \in \mathbb{F}_{max}$ is given. 
		We construct $m_\func \subseteq E_B$ in such a way that $(x_i, y_j) \in m_\func$ if and only if $\func(v_i) = \func(v_j)$ and $(v_i, v_j) \in E'$.
		Then $\Phi(m_\func) = \func$ follows by definition of $\Phi$.
	\end{proof}
	
	\subsection{Proof of Theorem 3}
	
	\begin{definition}
		\normalfont
		For a stream assignment $\func$ on $G'$, we define $Q(\func) \subseteq V$ as follows. 
		\begin{equation*}
		Q(\func) = \{v \in V \;|\; \exists p \in V \; s.t.\; (p, v) \in E' \; \text{and} \; \func(p) = \func(v)\}
		\end{equation*}
		That is, a node $v \in V$ is included in $Q(\func)$ if and only if it has at least one parent node which is mapped to the same stream as $v$ by $\func$.
	\end{definition}
	
	\begin{definition}
		\normalfont
		For a stream assignment $\func$ that satisfies maximum logical concurrency on $G'$, we define a function $R_\func(v) : Q(\func) \to V$ as follows. 
		\begin{equation*}
		R_\func : v \mapsto p \; s.t. (p, v) \in E' \; \text{and} \; \func(p) = \func(v)
		\end{equation*}
	\end{definition}
	
	\begin{lemma}
		\label{thm:lem3}
		The function $R_\func$ is well-defined.
	\end{lemma}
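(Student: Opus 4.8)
The plan is to observe that the existence half of well-definedness is immediate: by the very definition of $Q(\func)$, every $v \in Q(\func)$ has at least one parent $p$ in $G'$ with $\func(p) = \func(v)$, so there is at least one candidate for $R_\func(v)$. Hence the entire content of the lemma is \emph{uniqueness} of that parent. So first I would fix $v \in Q(\func)$ and suppose, toward a contradiction, that there are two distinct parents $p_1 \neq p_2$ with $(p_1, v), (p_2, v) \in E'$ and $\func(p_1) = \func(p_2) = \func(v)$.

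Next I would use the fact that $\func$ satisfies maximum logical concurrency on $G'$. Since $\func(p_1) = \func(p_2)$, the contrapositive of the concurrency condition guarantees that a directed path between $p_1$ and $p_2$ exists in $G'$; as $G'$ is a DAG this path has a single orientation, so without loss of generality assume it runs from $p_1$ to $p_2$. Appending the edge $(p_2, v) \in E'$ then yields a directed path from $p_1$ to $v$ of length at least $2$. (It is a genuine path: were $v$ to appear already on the $p_1$-to-$p_2$ segment, we would obtain a directed cycle $v \to \cdots \to p_2 \to v$, contradicting acyclicity.) In particular this path is not the single edge $(p_1, v)$.

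Finally I would invoke Lemma~\ref{thm:lem1}: because $(p_1, v) \in E'$, the set $\{(p_1, v)\}$ is the \emph{only} path in $G$ from $p_1$ to $v$. This contradicts the path of length $\geq 2$ just exhibited. Therefore $p_1 = p_2$, so $R_\func$ assigns a unique value to each $v \in Q(\func)$ and is well-defined. I expect the only subtle step to be the middle paragraph — converting the existence of \emph{some} $p_1$--$p_2$ path into a second, distinct $p_1$--$v$ path and checking it is an honest directed path — which is precisely where acyclicity and Lemma~\ref{thm:lem1} do the real work; the rest is bookkeeping.
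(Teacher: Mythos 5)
Your proof is correct and takes essentially the same route as the paper's: existence is immediate from the definition of $Q(\func)$, and uniqueness follows by using maximum logical concurrency to obtain a directed path between the two candidate parents and then deriving a contradiction with the minimality of the MEG. The only cosmetic difference is that you channel the final contradiction through Lemma~\ref{thm:lem1} (exhibiting a second $p_1$-to-$v$ path), whereas the paper appeals directly to the removability of the edge $(p_1, v)$ from the MEG --- which is the same argument.
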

	
	\begin{proof}[\textbf{\textit{Proof of Lemma~\ref{thm:lem3}}}]
		By definition of $Q(\func)$, $R_\func(v)$ exists for any $v \in Q(\func)$.
		What we have to show is the uniqueness of such $p$ for each $v$.
		Suppose $\exists p_1, p_2 \in V$ such that $(p_1, v), (p_2, v) \in E'$ and $\func(p_1) = \func(p_2)$.
		Since $\func$ satisfies maximum logical concurrency, there is a path between $p_1$ and $p_2$.
		Without loss of generality, assume that there is a path from $p_1$ to $p_2$.
		Then $(p_1, v) \in E'$ can be removed from the MEG of $G$, which contradicts the assumption that $G'$ is MEG of $G$.
	\end{proof}
	
	\begin{lemma}
		\label{thm:lem4}
		For a stream assignment $\func$ that satisfies maximum logical concurrency on $G'$,
		\begin{equation*}
		min_{sync}(G', \func) = |E'|-|Q(\func)|.
		\end{equation*}
	\end{lemma}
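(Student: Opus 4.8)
The plan is to show that, once we move to the minimum equivalent graph $G'$, the notion of a safe synchronization plan collapses to something completely rigid, after which the equality becomes a counting statement. First I would invoke Lemma~\ref{thm:lem1}: for any edge $(u, v) \in E'$, the singleton $\{(u, v)\}$ is the \emph{only} path in $G$ (and hence in $G'$) from $u$ to $v$. Substituting this into the definition of safety on $G'$, a plan $\Lambda \subseteq E'$ is safe for $\func$ if and only if, for every edge $(u, v) \in E'$ with $\func(u) \neq \func(v)$, we have $(u, v) \in \Lambda$ — there is simply no alternative route along which a synchronization could be placed, so every such edge must itself carry a synchronization. (Working directly on $G'$ avoids needing Lemma~\ref{thm:lem2} here.)

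It then follows immediately that
\[
\Lambda^\ast = \{(u, v) \in E' \mid \func(u) \neq \func(v)\}
\]
is safe and is contained in every safe plan, so it is the unique minimum-size safe plan, giving $min_{sync}(G', \func) = |\Lambda^\ast| = |E'| - |\{(u, v) \in E' \mid \func(u) = \func(v)\}|$. The remaining step is to identify the number of same-stream edges with $|Q(\func)|$. For this I would define $\psi$ on the set $\{(u, v) \in E' \mid \func(u) = \func(v)\}$ by $\psi(u, v) = v$. By the definition of $Q(\func)$, the image of $\psi$ is exactly $Q(\func)$, so $\psi$ is onto $Q(\func)$; and $\psi$ is injective precisely because a node $v \in Q(\func)$ has a \emph{unique} same-stream parent, which is the content of Lemma~\ref{thm:lem3} (using maximum logical concurrency together with the minimality of $G'$). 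Hence $\psi$ is a bijection, the two cardinalities coincide, and combining with the previous display yields $min_{sync}(G', \func) = |E'| - |Q(\func)|$.

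I expect the only subtle point to be the first step: one has to be careful that safety for $\func$ on $G'$ is phrased with respect to paths in $G'$, and that Lemma~\ref{thm:lem1} genuinely rules out any alternative path around a single MEG edge, so that the minimal safe plan is forced rather than merely constrained. Everything after that — reading off $\Lambda^\ast$ and checking that $\psi$ is a bijection — is routine bookkeeping.
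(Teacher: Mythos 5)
Your proof is correct and rests on the same two ingredients as the paper's: Lemma~\ref{thm:lem1} forces every cross-stream edge of $E'$ into any safe plan (so the minimum plan is exactly the set of cross-stream edges), and the uniqueness of the same-stream parent (Lemma~\ref{thm:lem3}) gives $|\{(u,v)\in E' \mid \func(u)=\func(v)\}| = |Q(\func)|$. The paper packages this as two inequalities (exhibiting the complement of $\{(R_\func(v),v) \mid v \in Q(\func)\}$ as a safe plan, then lower-bounding by summing over incoming edge sets $I_v$), whereas you characterize the unique minimum plan directly and count via the bijection $\psi$ — a slightly cleaner presentation of the same argument.
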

	
	\begin{proof}[\textbf{\textit{Proof of Lemma~\ref{thm:lem4}}}]
		We first show that $min_{sync}(G', \func) \leq |E'| - |Q(\func)|$.
		For any node $v \in Q(\func)$, there exists an edge $(R_\func(v), v) \in E'$.
		Observe that synchronization on the edge $(R_\func(v), v)$ is redundant because $\func(R_\func(v)) = \func(v)$.
		Thus, among all of the edges in $E'$, we can guarantee that at least $|Q(\func)|$ edges do not require synchronizations.
		
		Conversely, we show that $min_{sync}(G', \func) \geq |E'| - |Q(\func)|$.
		Let $\Lambda \in E'$ be a safe synchronization plan for $\func$ on $G'$ such that $|\Lambda| = min_{sync}(G', \func)$.
		Select an arbitrary node $v\in V$ and let $I_v \subseteq E'$ be the set of the incoming edges to $v$ in $G'$.
		If $v \notin Q(\func)$, for any edge $e = (p, v)\in I_v$, $e \in \Lambda$.
		This is because, by Lemma~\ref{thm:lem1}, $\{e\}$ is the only path between $p$ and $v$, and, therefore, any safe synchronization plan must include the edge $e$.
		If $v \in Q(\func)$, any edge $e \in I_v$ other than $(R_\func(v), v)$ must be included in $\Lambda$.
		Thus, the following inequality holds.
		\begin{equation*}
		min_{sync} \geq \sum_{v \notin Q(\func)} |I_v| + \sum_{v \in Q(\func)} (|I_v|-1)
		\end{equation*}
		Clearly, the righthand side is equal to $|E'| - |Q(\func)|$. 
		
	\end{proof}

	\begin{customthm}{3}
		\label{thm:3}
		For any matching $m \in \mathbb{M}$, the following equation holds.
		\begin{equation*}
		min_{sync}(G', \Phi(m)) = |E'| - |m|.
		\end{equation*}
	\end{customthm}
	
	\begin{proof}[\textbf{\textit{Proof of Theorem~\ref{thm:3}}}]
		Let $m \in \mathbb{M}$ be a matching of the bipartite graph $B$.
		By Theorem~\ref{thm:2} and Lemma~\ref{thm:lem4}, it suffices to show $|Q(\Phi(m))| = |m|$.
		For this purpose, we define a function $\Psi_m: Q(\Phi(m)) \to m$ and demonstrate that $\Psi_m$ is a bijection.
		
		We first define a function $H : E' \to E_B$ as $H: (v_i, v_j) \mapsto (x_i, y_j)$.
		Since we construct the bipartite graph $B$ in the same manner as $H$, it is trivial that the function $H$ is bijective.
		Now we define $\Psi_m$ as
		\begin{equation*}
		\Psi_m(v) = H(R_{\Phi(m)}(v), v), \; \; \forall v \in Q(\Phi(m))
		\end{equation*}

		We can easily confirm that $\Psi_m$ is injective.
		Since $H$ is bijective, if $\Psi_m(u) = \Psi_m(v)$ then $(R_{\Phi(m)}(u), u) = (R_{\Phi(m)}(v), v)$.
		Thus, $u = v$ follows.
		
		Next, we show that $\Psi_m$ is surjective.
		Select an arbitrary edge $(x_i, y_j) \in m$.
		Since $(x_i, y_j) \in E_B$, $(v_i, v_j) \in E'$. 
		Also, by definition of $\Phi$, $\Phi(m)(v_i) = \Phi(m)(v_j)$.
		Thus, it follows that $v_j \in Q(\Phi(m))$ and $R_{\Phi(m)}(v_j) = v_i$.
		That is, the first coordinate of $\Psi_m(v_j)$ is $x_i$.
		In addition, from the definition of $\Psi_m$ and $H$, it is clear that the second coordinate of $\Psi_m(v_j)$ is $y_j$.
		To sum up, it follows that $\Psi_m(v_j) = (x_i, y_j)$, i.e., $\Psi_m$ is surjective.
		
		Since $\Psi_m$ is a bijection between $Q(\Phi(m))$ and $m$, cardinality of the two sets are equal.
	\end{proof}

	\subsection{Time Complexity Analysis}
	
	Since the computation graph $G = (V, E)$ is a finite DAG, its minimum equivalent graph can be obtained in $O(V^3)$ time~\cite{meg}.
	To convert $G'$ into the bipartite graph $B$, \system computes the transitive closure of $G'$, which again takes $O(V^3)$ time.
	Additionally, in calculating a maximum matching of the bipartite graph $B$, \system uses Ford-Fulkerson method~\cite{ford_fulkerson} which costs $O(VE)$ time.
	To sum up, the stream assignment algorithm of \system takes $O(V^3)$ time in total.
	Note that \system computes the stream assignment once before the AoT scheduling, so the time spent on Algorithm 1 is amortized over iterations.
	Therefore, the time spent on the stream assignment algorithm can be considered negligible.

	\section{Details on Evaluation Setup}
	\label{sec:setup}

	The experiments to evaluate the performance of \system, which are described in Section 5, use the implementations of the neural networks from various open-source repositories.
	We summarize the information below.
	\begin{itemize}[leftmargin=20pt]
		\item torchvision repository\footnote{\url{https://github.com/pytorch/vision}}
		\begin{itemize}
			\item ResNet-50, ResNet-101, Inception-v3, MobileNetV2
		\end{itemize}
		\item Pretrained models for PyTorch repository\footnote{\url{https://github.com/Cadene/pretrained-models.pytorch}}
		\begin{itemize}
			\item NASNet-A (mobile), NASNet-A (large)
		\end{itemize}
		\item PyTorch Image Models repository\footnote{\url{https://github.com/rwightman/pytorch-image-models}}
		\begin{itemize}
			\item EfficientNet-B0, EfficientNet-B5
		\end{itemize}
		\item Differentiable Architecture Search repository\footnote{\url{https://github.com/quark0/darts}}
		\begin{itemize}
			\item AmoebaNet, DARTS
		\end{itemize}
		\item NVIDIA Deep Learning Examples repository\footnote{\url{https://github.com/NVIDIA/DeepLearningExamples}}
		\begin{itemize}
			\item BERT
		\end{itemize}
	\end{itemize}
	
	Throughout the evaluation, TorchScript modules are created through PyTorch's tracing API.
	For Caffe2, TensorRT and TVM, PyTorch models are first converted into ONNX~\cite{onnx} models and then parsed by the respective parsers of the systems.
	For the evaluation on inference latency, we use synthetic $224 \times 224$ RGB images as inputs, except for Inception-v3, NASNet-A (large), and EfficientNet-B5.
	For these neural networks, the inputs are larger size images - $299 \times 299$ for Inception-v3, $331 \times 331$ for NASNet-A (large), and $456 \times 456$ for EfficientNet-B5 - following the description in the original literature~\cite{inception, nasnet, efficientnet}.
	For the evaluation on training, we use $224 \times 224$ RGB images for the ImageNet dataset, and $32 \times 32$ RGB images for the CIFAR-10 dataset.
	We use a sequence length of $128$ in the experiments with BERT, following the setting used for pretraining in the original literature~\cite{bert}.	
	
	\section{Evaluation Results on Various GPUs}
	\label{sec:gpus}
	
	\begin{figure}[t]
		\centering
		\begin{subfigure}{\linewidth}
			\centering
			\includegraphics[width=\linewidth]{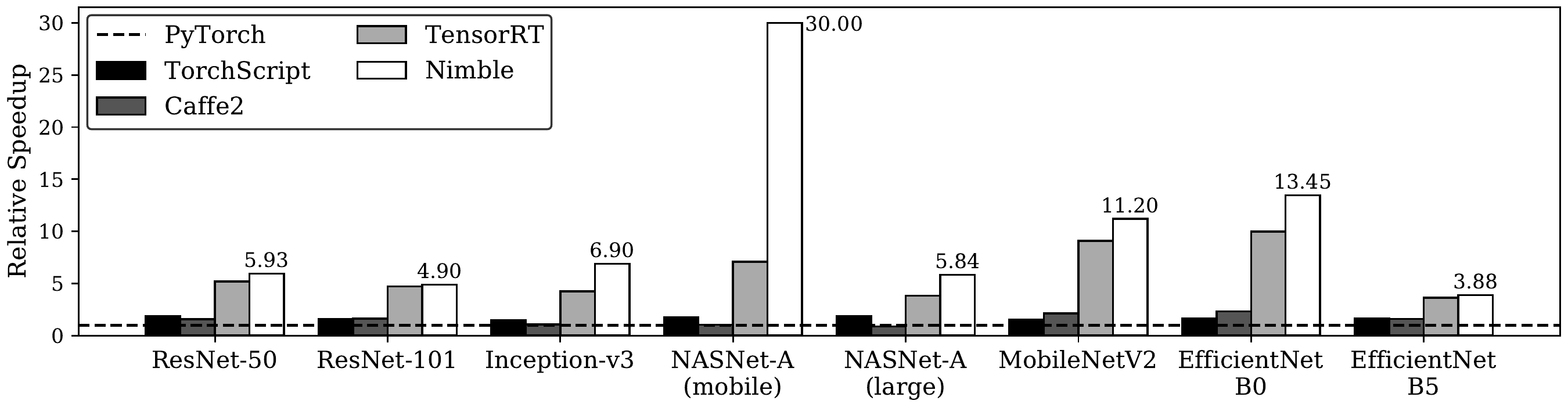}
			\vspace{-15pt}
			\caption{Results on an NVIDIA Titan RTX GPU.}
			\label{fig:inference_titan_rtx}
		\end{subfigure}
		\begin{subfigure}{\linewidth}
			\centering
			\includegraphics[width=\linewidth]{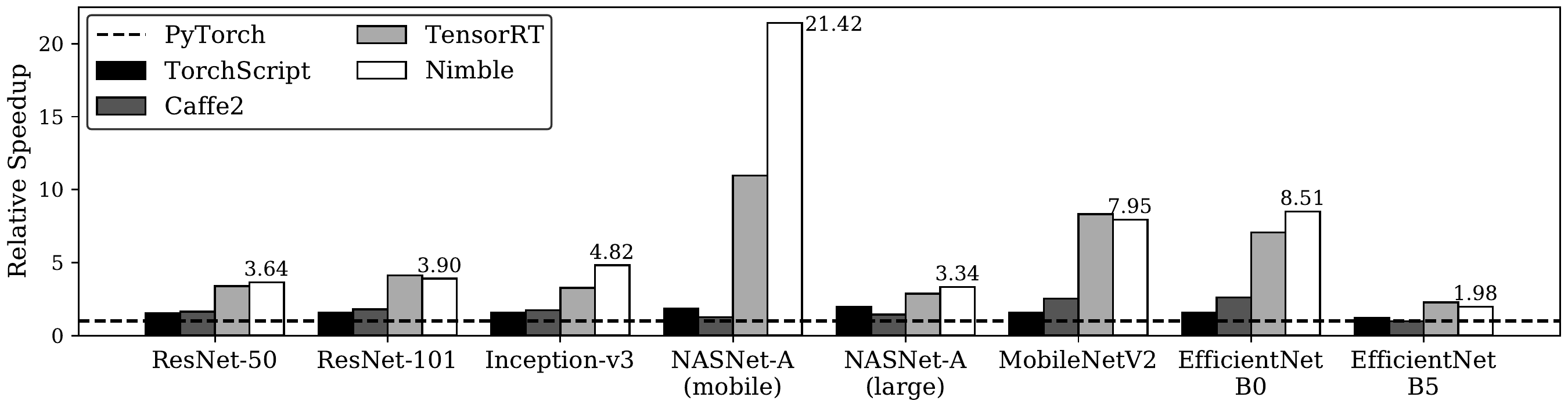}
			\vspace{-15pt}
			\caption{Results on an NVIDIA Titan Xp GPU.}
			\label{fig:inference_titan_xp}
		\end{subfigure}
		\caption{Relative inference speedup of Nimble and other systems (batch size 1).
		}
		\label{fig:inference_other_gpus}
	\end{figure}
	
	In addition to the evaluation results described in Section 5, we attach results on the different types of GPUs: NVIDIA Titan RTX and NVIDIA Titan Xp.
	We keep the other experimental settings the same.
	Note that we exclude TVM from this set of experiments because TVM needs to tune the kernels separately for each type of GPU for a long time.
	Figure~\ref{fig:inference_other_gpus} shows that \system achieves significant speedup across various GPU architectures ranging from Pascal to Turing.

	\section{Evaluation Results on Different Training Batch Sizes}
	\label{sec:batchsizes}
	\vspace{-5pt}
	\begin{figure}[t]
		\centering
		\begin{subfigure}{0.325\linewidth}
			\centering
			\includegraphics[width=\linewidth]{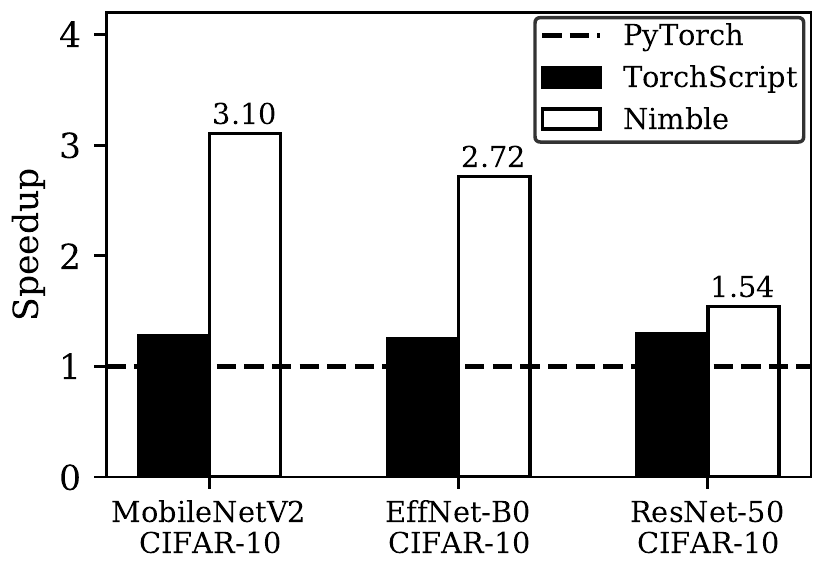}
			\caption{Training with batch size 64.}
			\label{fig:training_64}
		\end{subfigure}
		\begin{subfigure}{0.325\linewidth}
			\centering
			\includegraphics[width=\linewidth]{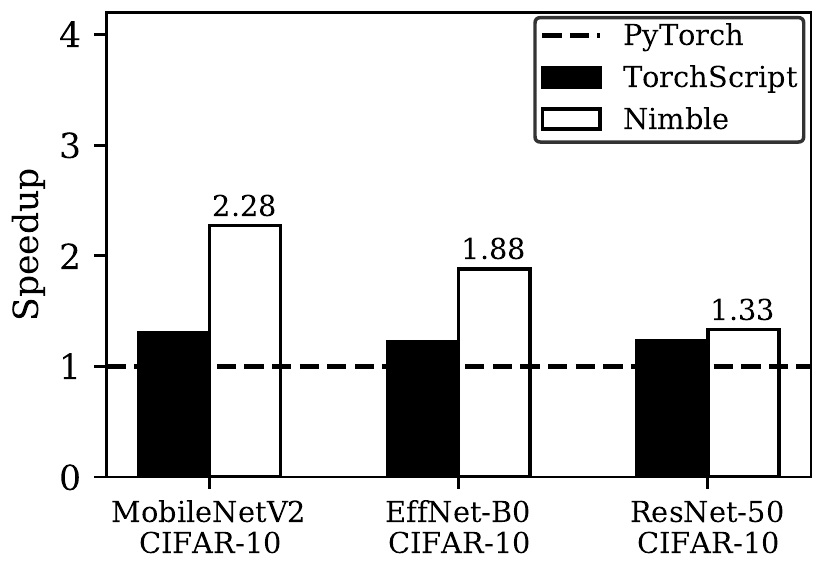}
			\caption{Training with batch size 128.}
			\label{fig:training_128}
		\end{subfigure}
		\begin{subfigure}{0.325\linewidth}
			\centering
			\includegraphics[width=\linewidth]{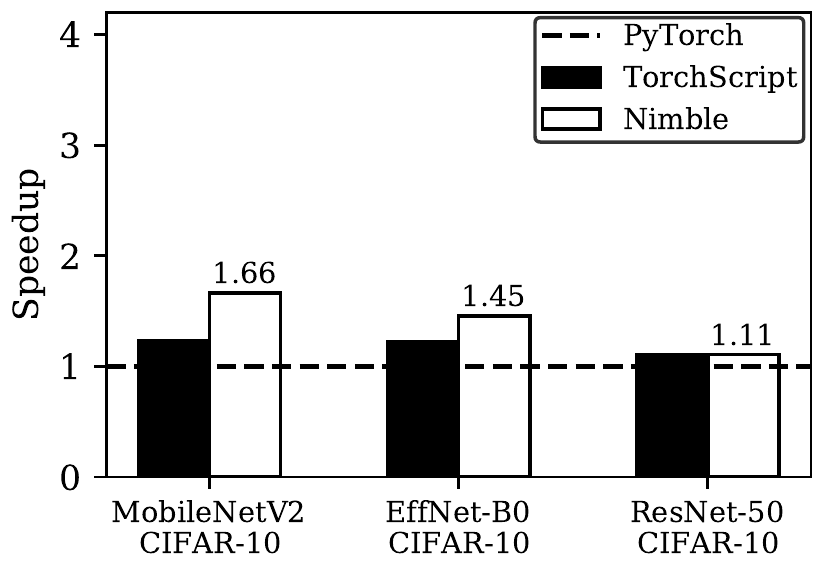}
			\caption{Training with batch size 256.}
			\label{fig:training_256}
		\end{subfigure}
		\caption{Relative training speedup of \system and TorchScript.}
		\vspace{-10pt}
		\label{fig:training_other_batch_sizes}
	\end{figure}
	
	We also present results on the performance of \system when training the neural networks with varying batch sizes.
	We use an NVIDIA V100 GPU, following the setting described in Section 5.
	Figure~\ref{fig:training_other_batch_sizes} shows that \system can achieve performance improvement in the training of the neural networks on the CIFAR-10 dataset even when the batch size is sufficiently large.
	
\end{appendices}

\end{document}